\documentclass[]{fairmeta}
\usepackage{pdflscape}
\usepackage{amsmath,amssymb,amsthm}
\usepackage{xspace}

\newtheorem{proposition}{Proposition}[section]

\newcommand{\oursb}{\mbox{RobustReview}\xspace}
\newcommand{\oursm}{\mbox{SciCore}\xspace}

\title{A Missing Piece for Trustworthy AI Reviewers: From Benchmarking Rhetorical Robustness to SciCore Review}

\author[*,1]{Chenguang Wang}
\author[*,2]{Ming Li}
\author[2]{Chengrui Fan}
\author[1]{Jianpeng Chen}
\author[3]{Han Chen}
\author[3]{Tianyi Zhou}
\author[1]{Dawei Zhou}

\renewcommand\affiliation[2][]{%
  \addtolist[#1]{#2}{\affiliationlist}{\affiliationformat}{~~~~}%
}

\affiliation[1]{Virginia Tech}
\affiliation[2]{University of Maryland}
\affiliation[3]{MBZUAI}

\contribution[*]{Co-first Author}

\authoremails{\email{minglii@umd.edu}, \email{\{cswang, dzhou\}@vt.edu}, \email{Tianyi.Zhou@mbzuai.ac.ae}}
\metadata[Project Page]{\url{https://github.com/c-steve-wang/Robust_Review}}

\abstract{
AI reviewers can assign different judgments to manuscripts that report the same science in different wording, potentially rewarding rhetorical optimization over scientific improvement. We formulate \textbf{Rhetorical Robustness} as the joint requirement of stability across content-preserving rewrites and discrimination across papers. We introduce \textbf{\oursb{}}, a controlled full-manuscript benchmark with 1,260 manuscript versions, and evaluate 30 reviewer configurations. The benchmark reveals \emph{false robustness}, where low rewrite sensitivity coincides with score collapse across papers, and shows that human alignment and rhetorical robustness rank reviewers differently. Moreover, the evaluated content-focused prompting protocol does not consistently improve robustness across backbones. Motivated by these findings, we introduce \textbf{\oursm{}}, a dual-branch reviewer that averages a full-manuscript judgment with a judgment based on an extracted, structured science core. This design combines manuscript-level assessment with a content-normalized view intended to reduce rhetorical sensitivity. In our primary GPT-5.5 comparison, \oursm{} achieves a leading joint stability-discrimination profile among the benchmarked reviewers while maintaining competitive human alignment. These results identify rhetorical robustness as a distinct evaluation target and demonstrate the potential of science-core review to improve it.

}

\begin{document}

\maketitle

\section{Introduction}

Large language models (LLMs) are increasingly being used to support scientific peer review, from generating manuscript feedback to assisting with review and decision making~\citep{wang2020reviewrobot,liang2024can,thakkar2026large,chen2026peercheck}. Because review judgments shape which work is accepted, revised, and disseminated, the reliability of AI reviewers matters not only for individual manuscripts but also for the broader scientific record~\citep{fytas2021makes,li2025developing}. Existing evaluations commonly ask whether AI-generated reviews are useful, resemble expert feedback, or reproduce human scores and decisions~\citep{liang2024can,zhou2024llm,li2025unveiling,chen2026peercheck}. These criteria are necessary, but they do not fully characterize a trustworthy AI reviewer. Such a reviewer should also preserve its scientific judgments when the same reported science is expressed in rhetorically different ways. We call this property \textbf{Rhetorical Robustness}, and argue that it is an important requirement for trustworthy AI reviewers. As LLMs make it increasingly easy to rewrite and strategically optimize manuscripts at low cost,\textbf{ review judgments that can be manipulated through wording alone would reward rhetorical optimization over scientific improvement and undermine the credibility of AI-based review}~\citep{kaneko2026paraphrasing,li2026gaming,li2026rhetoric}. Presentation may legitimately shape assessments of clarity and communicative quality, but it should not unduly alter judgments of scientific merit when the scientific content is preserved~\citep{james2024rigour}.

Existing work shows that AI review and LLM judging systems can be manipulated by overt instructions, adversarial phrasing, and strategically constructed text~\citep{ye2024we,lin2025breaking,collu2026misleading}. More importantly for scientific review, visible and meaning-preserving revisions to titles, abstracts, and full manuscripts can also alter automated evaluations to a large extent~\citep{du-2025-titletrap,kaneko2026paraphrasing,li2026gaming,baumann2026stop,yang2026no,li2026rhetoric}. However, the evaluation and design of AI reviewers have generally not treated rhetorical robustness as a joint requirement of stable judgment and scientific discrimination. Existing evaluations have extensively examined human alignment~\citep{liang2024can,chen2026peercheck}, while rhetorical robustness has remained a comparatively neglected dimension.

We therefore formulate rhetorical robustness through two complementary requirements: \textbf{within-paper stability} across rhetorical variants designed to preserve reported scientific content, and \textbf{between-paper discrimination}. Stability alone is insufficient: a reviewer assigning nearly identical scores to every paper would appear robust while failing to distinguish papers. The joint requirement asks whether reviewers can resist rhetorical variation without collapsing differences across papers. Human alignment captures a separate property, since agreement with human judgments on original manuscripts does not establish stability across their rhetorical variants. This motivates our central question: \textbf{Can AI reviewers remain stable under rhetorical variation while retaining paper-level discrimination?}

We operationalize this requirement through \textbf{\oursb{}}, a controlled full-manuscript benchmark built from 60 anonymized ICLR 2026 submissions, sampled equally from six mean human-review score intervals to cover different levels of human assessment. We retain each original manuscript and construct matched variants under 10 rhetorical conditions using two independent LLM systems, yielding 1,260 manuscripts in total. We evaluate 30 reviewer configurations spanning rubric-instructed LLMs, specialized scientific-review models, and agentic review systems under multiple review protocols. Our evaluation combines rhetorical stability and paper discrimination with agreement with human review scores.

The evaluated content-focused prompting protocol does not consistently improve robustness across backbones. We therefore introduce \textbf{\oursm{}}, a dual-branch framework. Because rhetorical robustness requires greater stability without sacrificing meaningful paper-level discrimination, \oursm{} uses a content-normalized scientific judgment to complement manuscript-level assessment. One branch reviews the complete manuscript. The other extracts a structured record of the manuscript's reported problem, claims, methods, assumptions, evidence, results, and limitations, and then reviews that record directly with an adapted protocol. We call this record a \emph{science core}. The final overall assessment is the mean of the two branch scores. The science-core branch is motivated by invariant representation theory~\citep{eaton1989group,dubois2021lossy}: the extracted record should vary little across rhetorical realizations of the same reported science while preserving distinctions across papers.
In our experiments, \oursm{} achieves a leading joint stability-discrimination profile while maintaining competitive human alignment. These results suggest that augmenting conventional review with a content-normalized scientific judgment can improve rhetorical robustness without discarding manuscript-level assessment.

Our contributions are threefold:
\begin{itemize}
\item We argue for \textbf{Rhetorical Robustness} as an important requirement for trustworthy AI reviewers and formulate it jointly through within-paper stability and between-paper discrimination, with human alignment evaluated as a distinct property.

\item We introduce \textbf{\oursb{}}, a controlled full-manuscript benchmark spanning rewrite strategies, reviewer families, and review protocols. It shows that content-focused review is nontrivial, exposes widespread configuration-dependent sensitivity, and identifies false robustness as a central measurement failure.

\item We introduce \textbf{\oursm{}}, a dual-branch framework that averages a conventional full-manuscript judgment with a content-normalized judgment from an extracted science core. The method improves the joint robustness profile while maintaining competitive human alignment in our primary evaluation.
\end{itemize}

\begin{figure}[!t]
  \centering
  \includegraphics[width=\linewidth]{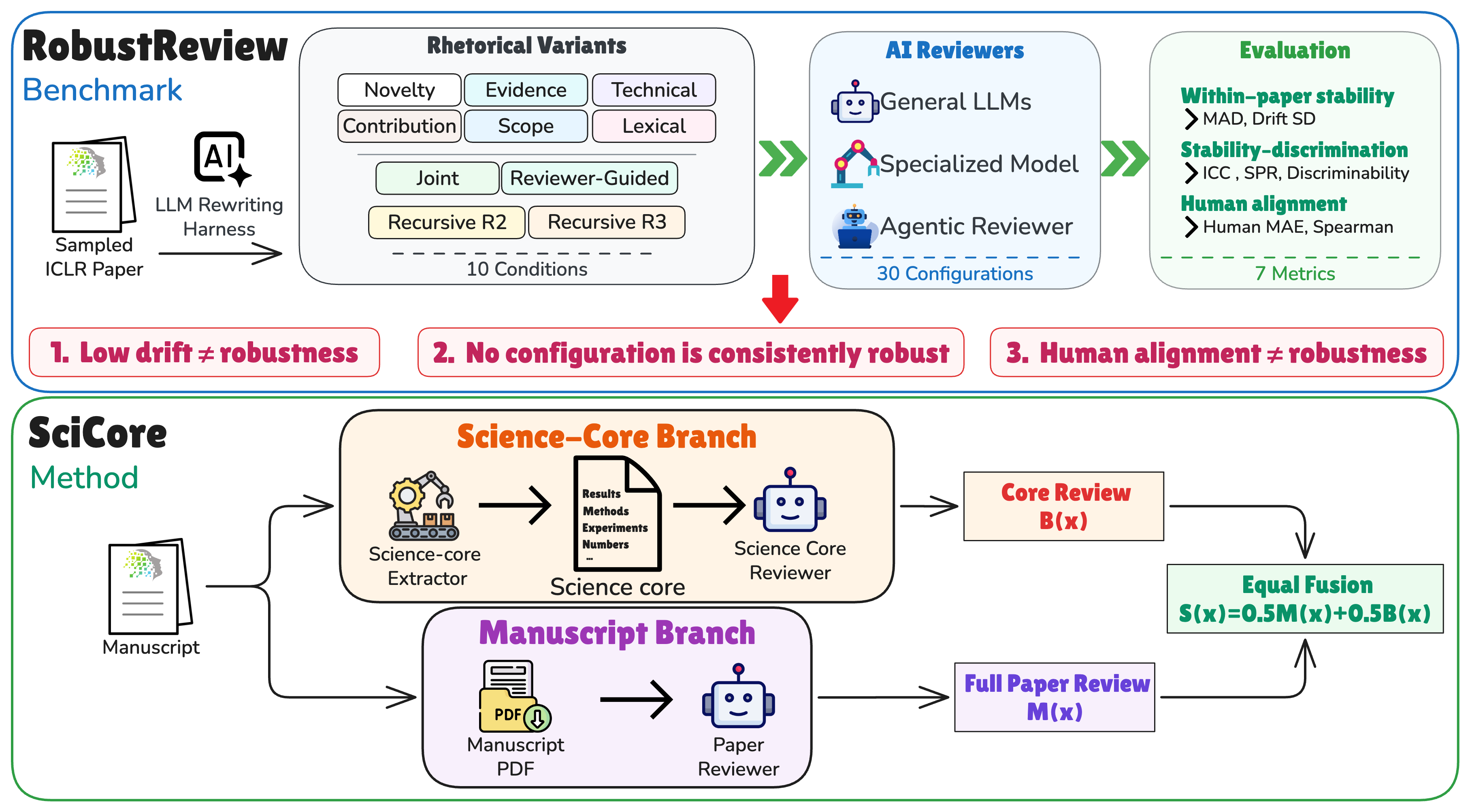}
  \caption{\textbf{Overview of the benchmark and method.} \oursb{} evaluates
  rhetorical robustness across controlled rhetorical variants, while \oursm{}
  averages a full-manuscript judgment with a content-normalized judgment
  obtained by extracting and reviewing the science core.}
  \label{fig:overview}
\end{figure}

\section{\oursb{}: Benchmark Design}
\label{sec:benchmark}
\label{sec:problem_formulation}

\subsection{Rhetorical Robustness}

We distinguish judgments of reported science from assessments of presentation. Clarity and style may legitimately affect the latter, but judgments of scientific contribution should not be unduly altered by rhetorical rewrites designed to preserve the reported scientific content. We therefore define \emph{rhetorical robustness} as the joint ability of an AI reviewer to maintain stable scientific judgments under such variation while retaining sensitivity to differences in reported scientific content across manuscripts.

Formally, for paper $i$, let $x_{i0}$ denote the original manuscript. Each controlled rewrite setting $k$, which specifies both a rhetorical condition and a rewrite producer, generates a variant $x_{ik}$ designed to preserve the paper's reported claims, methods, evidence, results, and conclusions while changing how this content is communicated. Such variation may involve claim stance, evidence framing, contribution organization, technical register, or lexical and syntactic realization. A reviewer configuration $m$ specifies both the reviewer model and review protocol. The resulting rewrite and review process is
\begin{equation}
\begin{aligned}
x_{ik} &= \operatorname{Rewrite}_{k}(x_{i0}), && k=1,\ldots,K,\\
y_{mik} &= \operatorname{Review}_{m}(x_{ik}), && k=0,\ldots,K.
\end{aligned}
\label{eq:rewrite_review_process}
\end{equation}
Here, $i=1,\ldots,N$ indexes papers, $x_{ik}$ is variant $k$ of paper $i$, and $y_{mik}$ is the judgment assigned by reviewer configuration $m$ to that presentation. A rhetorically robust reviewer should jointly satisfy two complementary requirements. First, \textbf{within-paper stability} requires judgments to remain consistent across rhetorical variants of the same paper. This condition alone is insufficient because indiscriminately constant scores would be maximally stable. Second, \textbf{between-paper discrimination} requires the reviewer to preserve distinctions based on the reported scientific content of different manuscripts, thereby ruling out this collapse. This requirement does not treat cross-paper score variation as ground-truth scientific merit; it asks whether paper differentiation remains large enough relative to rewrite-induced variation to be meaningful.

Rhetorical robustness therefore constitutes a \textbf{joint stability-discrimination requirement}: it requires within-paper stability without sacrificing between-paper discrimination. Section~\ref{sec:metrics} operationalizes this joint requirement with two direct within-paper metrics and three joint stability-discrimination metrics. The joint metrics do not measure between-paper behavior in isolation: each relates within-paper consistency to cross-paper variation or separation. Alignment with human reviewer judgments remains a distinct evaluation dimension, which we measure separately.

\subsection{Benchmark Construction}

\oursb{} operationalizes rhetorical robustness through \emph{matched paper families}, each containing an original manuscript and rhetorical variants designed to preserve its reported scientific content. Comparisons within each family directly measure rewrite-induced instability. Comparisons across families then provide the reference needed to determine whether that within-paper consistency coexists with differentiation among papers. Human judgments on the original manuscripts provide a separate reference for alignment.

We construct \oursb{} from 60 anonymized ICLR 2026 submissions with matched arXiv \LaTeX{} sources. We stratify eligible papers by their mean human overall-assessment rating and randomly sample 10 papers from each of six score intervals. This balanced sampling covers papers with different human-assessed ratings, while the human scores provide an external reference for evaluating the reviewers' assessments. We apply 10 rhetorical conditions. Six single-dimension conditions alter novelty stance, scope framing, evidence framing, contribution salience, technical register, or linguistic complexity. Four complex conditions apply a joint rewrite across dimensions, two or three recursive rewrite rounds (R2 and R3), or a reviewer-guided rewrite based on model feedback. Each condition is independently instantiated by GPT-5.5~\citep{openai_gpt55} and Claude Opus 4.8~\citep{anthropic_claude_opus48}, producing two variants per paper and condition. The resulting corpus contains 60 original manuscripts and 1,200 rhetorical variants, or 1,260 full manuscripts in total. All rewrites operate on complete \LaTeX{} projects under content-preservation and structural controls. Automated and human audits indicate that core technical content is largely preserved across the five assessed dimensions (Appendix~\ref{app:rewrite_fidelity}). Appendices~\ref{app:benchmark_inventory} and~\ref{app:rewrite_construction} give the sampling procedure, complete intervention definitions, rewrite procedure, and construction checks.

\subsection{Reviewer Configurations}
\label{sec:reviewers}

We evaluate 30 reviewer configurations across three system families. The general-purpose family comprises GPT-5.5~\citep{openai_gpt55}, GPT-5-mini~\citep{openai_gpt5mini}, Claude Sonnet 5~\citep{anthropic_claude_sonnet5}, GLM-5.2~\citep{glm5team2026glm5vibecodingagentic}, Kimi-K2.6~\citep{kimi_k2_6}, GPT-OSS-120B~\citep{openai2025gptoss120bgptoss20bmodel}, Gemini-3.5-Flash-Lite~\citep{google_gemini35flashlite}, and Qwen-3.5-Flash~\citep{qwen3.5flash}. Each is prompted under three protocols: \textsc{Standard}, \textsc{Strict}, and \textsc{Persistent}. \textsc{Standard} follows the ICLR review criteria and scoring scheme. \textsc{Strict} raises the evidentiary threshold through a more conservative rubric. \textsc{Persistent} retains the standard criteria while repeatedly instructing the reviewer to base scientific judgments on substantive content rather than rhetorical presentation. It therefore directly tests whether content-only instructions are sufficient to separate scientific judgment from rhetorical presentation. We additionally evaluate the specialized models OpenReviewer, CycleReviewer, and DeepReviewer~\citep{idahl-ahmadi-2025-openreviewer,weng2025cycleresearcherimprovingautomatedresearch,zhu-etal-2025-deepreview}, and the agentic systems AI Scientist, OpenJudge, and ProReviewer~\citep{lu2024aiscientistfullyautomated,openjudge2025,fang2026passivegenerationinvestigationproactive}, using their native review procedures. For every configuration, an original manuscript and all of its rhetorical variants are evaluated with the same procedure and scoring criteria. Appendix~\ref{app:reviewer_execution} reports the system configurations and execution settings.

\subsection{Evaluation Metrics}
\label{sec:metrics}

Following the definition above, we evaluate every reviewer configuration with seven metrics. MAD and Drift SD directly measure within-paper stability (lower is better). Because low drift alone can result from score collapse, ICC, SPR, and discriminability jointly evaluate within-paper consistency relative to cross-paper variation or separation (higher is better); we refer to these as \emph{joint stability-discrimination metrics}. Human MAE measures absolute agreement with mean human overall-assessment scores, and Spearman correlation measures agreement with the human ranking of the original papers (lower and higher are better, respectively). On the score-stratified benchmark, these human-alignment metrics complement the robustness metrics by assessing whether the reviewers' scores and rankings agree with human evaluations. Together, the seven metrics characterize stability under rhetorical rewriting, discrimination among papers, and alignment with human judgments. Appendix~\ref{app:metric_definitions} gives the complete definitions and equations.

\section{Benchmark Findings: Limits of Current AI Reviewers}
\label{sec:benchmark_results}

Table~\ref{tab:benchmark_main} reports all seven metrics for 30 existing reviewer configurations and \oursm{} under the same matched-manuscript design on \oursb{}. The within-paper metrics quantify absolute score movement, the joint metrics test whether stability coexists with paper-level discrimination, and the human-alignment metrics provide a separate external comparison.  The analysis here focuses on the existing reviewers, with \oursm{} included as a common-scale reference. Because the metrics capture different behaviors, no single column is sufficient for identifying a robust reviewer.

\begin{table}[!t]
  \centering
  \caption{\textbf{Main comparison of rhetorical robustness and human alignment.} MAD and Drift SD directly measure rewrite-induced within-paper instability. ICC, SPR, and discriminability are joint stability-discrimination metrics: each evaluates within-paper consistency relative to cross-paper variation or separation and should not be interpreted as a between-paper-only measure. Arrows indicate the preferred direction. The best point estimate in each column is shown in bold, the second-best is underlined, and the third-best is italicized.}
  \label{tab:benchmark_main}
  \begingroup
  \resizebox{\linewidth}{!}{%
  \begin{tabular}{@{}ll|cc|cc|ccc@{}}
    \toprule
    \multirow{2}{*}{\textbf{System}} & \multirow{2}{*}{\textbf{Protocol}} &
    \multicolumn{2}{c}{\textbf{Human alignment}} &
    \multicolumn{2}{c}{\textbf{Within-paper stability}} &
    \multicolumn{3}{c}{\textbf{Joint stability-discrimination}} \\
    \cmidrule(lr){3-4}\cmidrule(lr){5-6}\cmidrule(lr){7-9}
    & & H-MAE $\downarrow$ & Spearman $\uparrow$ &
    MAD $\downarrow$ & Drift SD $\downarrow$ & ICC $\uparrow$ &
    SPR $\uparrow$ & Discrim. $\uparrow$ \\
    \midrule
    \multicolumn{9}{@{}l}{\itshape General-purpose LLM reviewers} \\
    \cmidrule(lr){1-9}
    \multirow[t]{3}{*}{GPT-5.5~\citep{openai_gpt55}}
      & Standard   & 1.294 & 0.448 & 0.598 & 0.956 & 0.615 & \textit{0.555} & 0.649 \\
      & Strict     & \underline{1.078} & \textbf{0.529} & 0.766 & 1.202 & \textit{0.632} & 0.507 & \textit{0.672} \\
      & Persistent & 1.261 & 0.423 & 0.476 & 0.875 & \underline{0.697} & \underline{0.586} & \underline{0.682} \\
    \cmidrule(lr){1-9}
    \multirow[t]{3}{*}{GPT-5-mini~\citep{openai_gpt5mini}}
      & Standard   & 1.711 & 0.417 & 0.543 & 0.945 & 0.401 & 0.390 & 0.586 \\
      & Strict     & 1.240 & 0.403 & 0.895 & 1.221 & 0.393 & 0.424 & 0.598 \\
      & Persistent & 1.628 & 0.335 & 0.574 & 0.982 & 0.476 & 0.475 & 0.589 \\
    \cmidrule(lr){1-9}
    \multirow[t]{3}{*}{Claude Sonnet 5~\citep{anthropic_claude_sonnet5}}
      & Standard   & 1.189 & 0.414 & 0.468 & 0.863 & 0.579 & \textit{0.555} & 0.646 \\
      & Strict     & \textit{1.151} & 0.475 & 0.612 & 1.050 & 0.546 & 0.550 & 0.637 \\
      & Persistent & 1.239 & 0.360 & \underline{0.307} & 0.710 & 0.489 & 0.491 & 0.615 \\
    \cmidrule(lr){1-9}
    \multirow[t]{3}{*}{GLM-5.2~\citep{glm5team2026glm5vibecodingagentic}}
      & Standard   & 2.044 & $-0.034$ & 1.500 & 2.341 & 0.226 & 0.322 & 0.558 \\
      & Strict     & 2.047 & $-0.038$ & 1.548 & 2.094 & 0.212 & 0.389 & 0.556 \\
      & Persistent & 2.403 & $-0.190$ & 1.932 & 2.682 & 0.232 & 0.381 & 0.558 \\
    \cmidrule(lr){1-9}
    \multirow[t]{3}{*}{Kimi-K2.6~\citep{kimi_k2_6}}
      & Standard   & 1.689 & 0.280 & 1.353 & 1.995 & 0.211 & 0.428 & 0.555 \\
      & Strict     & 1.737 & 0.056 & 0.753 & 1.261 & 0.287 & 0.377 & 0.544 \\
      & Persistent & 1.578 & 0.050 & 1.139 & 1.645 & 0.283 & 0.392 & 0.568 \\
    \cmidrule(lr){1-9}
    \multirow[t]{3}{*}{GPT-OSS-120B~\citep{openai2025gptoss120bgptoss20bmodel}}
      & Standard   & 1.544 & 0.072 & 0.717 & 1.129 & 0.138 & 0.384 & 0.532 \\
      & Strict     & 1.692 & 0.119 & 0.385 & 0.888 & 0.091 & 0.353 & 0.508 \\
      & Persistent & 1.528 & $-0.008$ & 0.858 & 1.317 & 0.161 & 0.351 & 0.536 \\
    \cmidrule(lr){1-9}
    \multirow[t]{3}{*}{Gemini 3.5 Flash-Lite~\citep{google_gemini35flashlite}}
      & Standard   & 3.178 & 0.408 & \textbf{0.205} & \underline{0.631} & 0.199 & 0.468 & 0.511 \\
      & Strict     & 1.878 & 0.465 & 0.840 & 1.193 & 0.424 & 0.520 & 0.599 \\
      & Persistent & 2.828 & \textit{0.487} & 0.443 & 0.968 & 0.347 & 0.502 & 0.539 \\
    \cmidrule(lr){1-9}
    \multirow[t]{3}{*}{Qwen 3.5 Flash~\citep{qwen3.5flash}}
      & Standard   & 2.350 & 0.379 & 0.809 & 1.339 & 0.291 & 0.472 & 0.561 \\
      & Strict     & 1.239 & 0.409 & 0.866 & 1.340 & 0.226 & 0.404 & 0.541 \\
      & Persistent & 2.037 & 0.461 & 0.863 & 1.281 & 0.597 & 0.519 & 0.597 \\
    \midrule
    \multicolumn{9}{@{}l}{\itshape Specialized review models} \\
    \cmidrule(lr){1-9}
      OpenReviewer~\citep{idahl-ahmadi-2025-openreviewer}  & \multicolumn{1}{c}{--} & 1.406 & 0.339 & 1.080 & 1.524 & 0.215 & 0.374 & 0.542 \\
      CycleReviewer~\citep{weng2025cycleresearcherimprovingautomatedresearch} & \multicolumn{1}{c}{--} & 1.403 & 0.074 & 0.779 & 1.047 & 0.128 & 0.350 & 0.533 \\
      DeepReviewer~\citep{zhu-etal-2025-deepreview}  & \multicolumn{1}{c}{--} & 1.471 & 0.406 & 0.454 & 0.689 & 0.161 & 0.381 & 0.537 \\
    \midrule
    \multicolumn{9}{@{}l}{\itshape Agentic review systems} \\
    \cmidrule(lr){1-9}
      AI Scientist~\citep{lu2024aiscientistfullyautomated} & \multicolumn{1}{c}{--} & 1.513 & 0.094 & 0.838 & 1.253 & 0.213 & 0.382 & 0.544 \\
      OpenJudge~\citep{openjudge2025}    & \multicolumn{1}{c}{--} & 1.306 & 0.120 & \textit{0.335} & \textbf{0.597} & 0.239 & 0.444 & 0.526 \\
      ProReviewer~\citep{fang2026passivegenerationinvestigationproactive}  & \multicolumn{1}{c}{--} & 1.478 & 0.230 & 0.709 & 0.978 & 0.094 & 0.342 & 0.516 \\
    \midrule
    \multicolumn{9}{@{}l}{\itshape Our method} \\
    \cmidrule(lr){1-9}
      \textbf{\oursm{}} & \multicolumn{1}{c}{--} &
      \textbf{1.072} & \underline{0.488} & 0.476 & \textit{0.687} &
      \textbf{0.775} & \textbf{0.652} & \textbf{0.726} \\
    \bottomrule
  \end{tabular}
  }
  \endgroup
\end{table}

\subsection{Content-Only Review Is Not Reliably Achieved by Prompting Alone}
\label{sec:content_only_prompting}

We evaluate \textsc{Persistent}, a full-manuscript review protocol that repeatedly emphasizes scientific content and provides explicit evidence-based scoring guidance. Relative to \textsc{Standard}, its effects vary across backbones: only GPT-5.5 improves on all five robustness metrics. For Claude Sonnet 5, \textsc{Persistent} reduces MAD and Drift SD but lowers ICC, SPR, and discriminability, while the remaining backbones also show mixed changes. These results show that the evaluated content-focused prompting protocol does not consistently improve rhetorical robustness across backbones, motivating our investigation of an explicit content-normalized branch.

\subsection{False Robustness: Within-Paper Stability without Discrimination}
\label{sec:limited_robustness}

Gemini-3.5-Flash-Lite under \textsc{Standard} achieves the lowest MAD among the existing configurations, at 0.205, but its ICC is only 0.199 and its discriminability is 0.511, close to chance. The same pattern appears among specialized and agentic reviewers: DeepReviewer and OpenJudge show relatively low drift, yet attain ICC values of only 0.161 and 0.239 and discriminability values of 0.537 and 0.526. By contrast, GPT-5.5 under \textsc{Persistent} has a higher MAD of 0.476 but the strongest ICC, SPR, and discriminability among the 30 configurations. We call low rewrite-induced drift accompanied by weak paper differentiation \emph{false robustness}: invariance alone can create the appearance of robustness without the discrimination that robustness is meant to preserve.

\subsection{Human Alignment and Rhetorical Robustness Are Distinct}
\label{sec:human_alignment_robustness}

Within GPT-5.5, \textsc{Strict} provides the strongest human-alignment profile among the existing configurations, with a Human MAE of 1.078 and a Spearman correlation of 0.529, whereas \textsc{Persistent} has weaker human alignment but higher ICC, SPR, and discriminability. Human alignment asks whether a reviewer reproduces human judgments on observed manuscripts; rhetorical robustness asks whether judgments remain stable across matched rhetorical variants while preserving differences among papers. The two evaluations therefore favor different protocols, so human agreement cannot substitute for matched robustness evaluation.

\section{\oursm{}: Dual-Branch Review for Rhetorical Robustness}
\label{sec:method}

The central idea of \oursm{} is to augment manuscript-based review with a scientific judgment that is less coupled to rhetorical presentation. Unlike \textsc{Persistent}, which asks a reviewer to disregard rhetoric while still reading the complete manuscript, the science-core branch first transforms the input into a content-normalized scientific record. \oursm{} uses two complementary branches. The \emph{manuscript branch} reviews the complete paper under the \textsc{Strict} protocol, retaining the full manuscript context. The \emph{science-core branch} extracts a structured \emph{science core} containing the reported scientific record and reviews that record with an adapted protocol. The final overall assessment is the arithmetic mean of the two branch scores. This design preserves a conventional judgment of the paper while giving equal weight to a content-normalized judgment intended to vary less with rhetorical framing, organization, and linguistic expression.

\subsection{Design Principle: A Stable Scientific Branch without Discarding the Manuscript}
\label{sec:scicore_theory}

The science-core branch is designed to provide a scientific judgment that is less sensitive to rhetorical realization while retaining distinctions among papers. Conceptually, it maps different presentations of the same reported science to similar structured records without collapsing scientifically distinct manuscripts. This invariant-representation view motivates extracting and reviewing a science core, but the implemented extractor is only an approximation: its stability and paper separation must be established empirically rather than assumed.

The science-core branch complements rather than replaces manuscript review. When the science-core branch varies less across matched rhetorical realizations than the manuscript branch, averaging their scores can reduce rhetorical sensitivity while retaining the full paper context. This intuition addresses direct within-paper stability only; the final reviewer must still be evaluated using the joint stability-discrimination metrics to ensure that lower score drift does not come from cross-paper collapse.

We review the extracted record directly instead of first reconstructing another manuscript from it. Reconstruction cannot create decision-relevant information absent from its inputs and may introduce another rhetorical realization, but this information-theoretic argument does not guarantee better performance for a fixed LLM or order our implemented pipelines. We therefore treat direct review as a design choice and compare it empirically with ReconstructReview. Appendix~\ref{app:scicore_theory} gives the formal invariant-representation, conditional-stability, fusion, and Blackwell arguments.

\subsection{Science-Core Extraction: Isolating the Reported Scientific Record}
\label{sec:content_extraction}

The science-core branch first uses an LLM to extract a structured science core from the complete manuscript. The extraction focuses on information directly relevant to scientific evaluation, including the central research idea and claims, problem formulation, mathematical formulations and derivations, methods and assumptions, experimental or theoretical evidence, reported results, author-stated contributions, reproducibility information, and stated limitations.

The LLM is instructed to extract this information in objective, third-person language while preserving its attribution to the original manuscript. This is particularly important for scientific claims and author-stated contributions, where rhetorical framing could otherwise carry into the science core. The extractor is instructed to attribute claims and contributions to the authors, preserve reported results and numerical evidence, and avoid subjective assessments, evaluative language, emotional framing, and unsupported interpretations. It does not determine whether a claim is convincing, whether a contribution is significant, or how the paper should be scored.

The science core is produced as a text-based scientific record rather than a shortened rewrite of the manuscript. Figures are not retained directly. Tables are instead explicitly transcribed so that their reported values, comparisons, and other scientific information remain available to the downstream reviewer. The prompt asks the extractor to retain equations, mathematical arguments, reported numbers, and other details whenever they can be reliably recovered. When information cannot be located or read reliably, the extractor records this uncertainty rather than inferring or reconstructing the missing content.

\subsection{Science-Core Branch: Evaluating the Extracted Record}
\label{sec:report_review}

After extraction, the science-core branch reviews only the extracted record. The original manuscript is not provided to this branch at the review stage, so its judgment is based on the retained scientific record rather than its original rhetorical presentation.

We build the review protocol on the \textsc{Standard} protocol used in \oursb{}, while adapting its instructions to the structure of the science core. The prompt explains the role of each extracted section and how information across sections should be combined when forming a scientific judgment. In particular, the reviewer is instructed to connect the stated research problem and claims with the corresponding methods, assumptions, mathematical reasoning, and empirical or theoretical evidence; to assess whether the reported results support the author-stated claims and contributions; and to use the reproducibility information and stated limitations when evaluating the strength and scope of the evidence. Information that is explicitly absent from the manuscript can be treated as missing scientific support when relevant, while extraction uncertainty is handled separately.

The review retains the ICLR-style evaluation criteria and scoring scheme used by \textsc{Standard}. The reviewer produces written feedback, including a summary of the work, strengths, weaknesses, and questions, and an overall assessment, together with supporting scientific-evaluation scores. The score anchors follow the corresponding ICLR-style scales, with the review decision grounded in the scientific evidence contained in the science core.

\subsection{Dual-Branch Fusion: Integrating Manuscript and Science-Core Judgments}
\label{sec:scicore_fusion}

In parallel with the science-core branch, the manuscript branch applies the \textsc{Strict} protocol from Section~\ref{sec:reviewers} directly to the complete manuscript PDF. This branch retains the ordinary manuscript-level review context, while its evidence-focused rubric provides the conventional judgment used in the fusion. Both branches produce an overall assessment on the same ICLR-style scale.

Let $M(x)$ denote the manuscript-branch overall-assessment score and let $B(x)$ denote the science-core-branch score for manuscript realization $x$. The final \oursm{} score is their unweighted arithmetic mean:
\begin{equation}
S_{\oursm}(x)=\frac{1}{2}M(x)+\frac{1}{2}B(x).
\label{eq:scicore_fusion}
\end{equation}
Equal weighting provides a direct symmetric combination without introducing a tuned fusion parameter. The manuscript branch and science-core branch remain independently interpretable, which allows the evaluation to distinguish the behavior of each component from that of the fused reviewer.

\section{\oursm{}: Results and Analysis}
\label{sec:scicore_evaluation}

\subsection{Main Results: Rhetorical Robustness on \oursb{}}
\label{sec:report_review_evaluation}

Table~\ref{tab:benchmark_main} compares \oursm{} with the general-purpose, specialized, and agentic AI reviewers on \oursb{}. \oursm{} leads the primary comparison on ICC (0.775), SPR (0.652), and discriminability (0.726), while attaining the lowest Human MAE (1.072) and second-highest human Spearman correlation (0.488). However, it does not achieve the lowest MAD or Drift SD, and its human Spearman correlation remains below GPT-5.5 under \textsc{Strict}. \oursm{} therefore achieves a leading joint stability-discrimination profile while maintaining competitive human alignment. Paired paper-family bootstrap analysis supports its improvements over Manuscript-Strict across all five robustness metrics, alongside improved human alignment relative to Core-Adapted (Appendix~\ref{app:paper_family_bootstrap}). These results reinforce the complementary roles of the two branches and demonstrate the effectiveness of a simple equal-weight fusion.

\subsection{Dissecting the Gains: Architecture and Review-Policy Ablations}
\label{sec:understanding_robustness_gains}

Table~\ref{tab:gpt55_review_comparison} isolates the design choices using a common GPT-5.5 backbone. Manuscript-Standard and Manuscript-Strict review the PDF directly. ReconstructReview reconstructs a paper from the science core, permitted figures, and bibliography before applying \textsc{Standard} review (Appendix~\ref{app:reconstruct_review}). Four core-only configurations share the same cached science core and differ only in review policy: Standard, Strict, Persistent, or the adapted protocol. The final \oursm{} row averages Core-Adapted with Manuscript-Strict.

\textbf{Directly reviewing the science core provides a stronger robustness branch than reconstruction in this pipeline.} Relative to Manuscript-Standard, ReconstructReview worsens MAD from 0.598 to 0.651 and Drift SD from 0.956 to 1.101, with only modest gains on the joint metrics. Core-Standard then improves all seven metrics over ReconstructReview. The result shows that reconstruction does not recover the robustness of direct science-core review here. Proposition~\ref{prop:blackwell_reconstruction} provides an information-theoretic rationale for direct review but does not predict the ordering of the implemented pipelines.

\textbf{Content-focused review remains nontrivial after extraction.} Core-Persistent worsens all five robustness metrics relative to Core-Standard, whereas Core-Adapted achieves the lowest MAD and Drift SD and the highest ICC and SPR among the core-only configurations. The review policy thus affects the balance between score stability and paper discrimination even when the extracted representation is held fixed.

\textbf{The two branches provide complementary judgments.} Core-Adapted is more stable than Manuscript-Strict but aligns less closely with human scores. Fusion reduces MAD from 0.766 to 0.476 and increases ICC from 0.632 to 0.775, improving all five robustness metrics over Manuscript-Strict. It also improves human alignment, ICC, and discriminability over Core-Adapted, which retains lower MAD and Drift SD and higher SPR. Paired bootstrap intervals support these directions (Appendix~\ref{app:paper_family_bootstrap}).

With Manuscript-Strict and equal weighting fixed, the science-core branch
also improves all five robustness metrics over adding Standard or
Persistent manuscript review, supported by paired bootstrap intervals.
These controls share its two-score averaging rule and half-point
resolution. Appendix~\ref{app:manuscript_ensemble_controls} discusses the
mechanical effects of averaging and a three-review control matching
\oursm{}'s nominal call count. That control has lower Drift SD and higher
ICC, while \oursm{} has better point estimates on the other five metrics.

\begin{table}[!t]
\centering
\caption{\textbf{Within-backbone analysis of \oursm{} and its components.} Every configuration uses GPT-5.5. MAD and Drift SD are direct within-paper metrics, whereas ICC, SPR, and discriminability jointly relate within-paper consistency to cross-paper variation or separation. The final \oursm{} row averages the Manuscript-Strict and Core-Adapted scores. The best point estimate in each column is shown in bold and the second-best is underlined.}
\label{tab:gpt55_review_comparison}
\begingroup
\footnotesize
\setlength{\tabcolsep}{3pt}
\renewcommand{\arraystretch}{0.95}
\begin{tabular*}{\textwidth}{@{\extracolsep{\fill}}l|cc|cc|ccc@{}}
\toprule
\multirow{2}{*}{\textbf{Configuration}} &
\multicolumn{2}{c}{\textbf{Human alignment}} &
\multicolumn{2}{c}{\textbf{Within-paper stability}} &
\multicolumn{3}{c}{\textbf{Joint stability-discrimination}} \\
\cmidrule(lr){2-3}\cmidrule(lr){4-5}\cmidrule(lr){6-8}
& H-MAE $\downarrow$ & Spearman $\uparrow$ &
MAD $\downarrow$ & Drift SD $\downarrow$ & ICC $\uparrow$ &
SPR $\uparrow$ & Discrim. $\uparrow$ \\
\midrule
\multicolumn{8}{@{}l}{\itshape Manuscript-Based Review} \\
\cmidrule(lr){1-8}
Manuscript-Standard & 1.294 & 0.448 & 0.598 & 0.956 & 0.615 & 0.555 & 0.649 \\
Manuscript-Strict & \underline{1.078} & \textbf{0.529} & 0.766 & 1.202 & 0.632 & 0.507 & 0.672 \\
ReconstructReview & 1.490 & 0.133 & 0.651 & 1.101 & 0.623 & 0.566 & 0.664 \\
\midrule
\multicolumn{8}{@{}l}{\itshape Science-Core Branch} \\
\cmidrule(lr){1-8}
Core-Standard & 1.394 & 0.216 & \underline{0.354} & 0.737 & 0.734 & \underline{0.677} & \underline{0.697} \\
Core-Strict & 1.350 & 0.264 & 0.403 & 0.870 & 0.608 & 0.584 & 0.654 \\
Core-Persistent & 1.346 & 0.289 & 0.458 & 0.821 & 0.708 & 0.586 & 0.681 \\
Core-Adapted & 1.461 & 0.285 & \textbf{0.292} & \textbf{0.650} & \underline{0.751} & \textbf{0.710} & 0.695 \\
\midrule
\multicolumn{8}{@{}l}{\itshape Dual-Branch Fusion} \\
\cmidrule(lr){1-8}
\textbf{\oursm{} (ours)} & \textbf{1.072} & \underline{0.488} & 0.476 & \underline{0.687} & \textbf{0.775} & 0.652 & \textbf{0.726} \\
\bottomrule
\end{tabular*}
\endgroup
\end{table}

\subsection{Representation Analysis: Science-Core Stability and Paper Separation}
\label{app:representation_analysis}

A useful science-core representation should remain similar across rhetorical variants of the same paper while remaining distinct across different papers. Table~\ref{tab:report_similarity} compares each variant core with its matched original and with nonmatching originals across three extractors and two rewrite producers with cosine similarity of text embeddings.

Matched cores are consistently much more similar than cores from different papers. For GPT-5.5, matched similarity is 0.978, while cross-paper similarity ranges from 0.616 to 0.619. This gap provides embedding-level evidence that the extracted science cores remain stable across rhetorical rewrites while retaining paper-specific differences. Condition-level comparisons are reported in Appendix~\ref{app:condition_producer}.

\begin{table}[!t]
\centering
\caption{\textbf{Science-core cosine similarity across extractor models and rewrite producers.} All matched comparisons pair the science core extracted from each rewritten manuscript with the core extracted from the original manuscript of the same paper; different-paper controls pair it with the 59 nonmatching original cores. Each matched cell contains 600 pairs and each control cell contains 35,400 comparisons per producer. P05 denotes the fifth percentile of pair-level similarities.}
\label{tab:report_similarity}
\begingroup
\scriptsize
\setlength{\tabcolsep}{3.2pt}
\renewcommand{\arraystretch}{0.96}
\begin{tabular}{@{}llcccccc@{}}
\toprule
\multirow{2}{*}{\textbf{Extractor Model}} &
\multirow{2}{*}{\textbf{Comparison}} &
\multicolumn{3}{c}{\textbf{GPT-5.5 producer}} &
\multicolumn{3}{c}{\textbf{Opus 4.8 producer}} \\
\cmidrule(lr){3-5}\cmidrule(lr){6-8}
& & Mean & Median & P05 & Mean & Median & P05 \\
\midrule
\multirow{2}{*}{GPT-5-mini}
& All matched & 0.975 & 0.977 & 0.964 & 0.973 & 0.976 & 0.962 \\
& Different-paper control & 0.668 & 0.668 & 0.582 & 0.670 & 0.671 & 0.585 \\
\midrule
\multirow{2}{*}{GPT-5.5}
& All matched & 0.978 & 0.979 & 0.966 & 0.978 & 0.979 & 0.965 \\
& Different-paper control & 0.616 & 0.617 & 0.516 & 0.619 & 0.620 & 0.518 \\
\midrule
\multirow{2}{*}{Gemini-3.5-Flash-Lite}
& All matched & 0.963 & 0.965 & 0.934 & 0.961 & 0.964 & 0.934 \\
& Different-paper control & 0.606 & 0.606 & 0.506 & 0.609 & 0.608 & 0.509 \\
\bottomrule
\end{tabular}
\endgroup
\end{table}

\subsection{Criterion-Level Behavior of the Science-Core Branch}
\label{app:criterion_level}

Table~\ref{tab:secondary_score_ablation} shows that branch behavior is criterion-dependent. Science-core review generally improves soundness stability, but the contribution is less consistent, and human alignment does not improve uniformly. Presentation is diagnostic because manuscript-based review evaluates the paper's presentation, whereas science-core review evaluates the clarity and completeness of the extracted scientific record.

Confidence exposes a concrete false-robustness failure mode. Core-Standard and Core-Strict assign a constant confidence score, and Core-Persistent produces almost no variation, so their near-zero drift reflects output collapse and coincides with zero or near-zero SPR. Core-Adapted restores paper-level variation and raises confidence SPR to 0.394. This result again shows why within-paper stability cannot be interpreted without discrimination.

\begin{table}[!t]
\centering
\caption{\textbf{Secondary-score behavior across GPT-5.5 branch configurations.} For each secondary score, we report direct within-paper
stability (MAD), joint stability-discrimination (SPR), and Spearman
correlation with the corresponding mean human score. Presentation is marked as
diagnostic because its meaning differs between manuscript-based and direct
science-core review. A dash denotes an undefined Spearman correlation due to
constant scores; SPR is defined as zero for fully constant outputs.}
\label{tab:secondary_score_ablation}
\begingroup
\scriptsize
\setlength{\tabcolsep}{2.5pt}
\renewcommand{\arraystretch}{0.96}
\begin{tabular*}{\textwidth}{@{\extracolsep{\fill}}l*{12}{c}@{}}
\toprule
\multirow{2}{*}{\textbf{Configuration}} &
\multicolumn{3}{c}{\textbf{Soundness}} &
\multicolumn{3}{c}{\textbf{Contribution}} &
\multicolumn{3}{c}{\textbf{Presentation}$^{\dagger}$} &
\multicolumn{3}{c}{\textbf{Confidence}} \\
\cmidrule(lr){2-4}\cmidrule(lr){5-7}\cmidrule(lr){8-10}\cmidrule(lr){11-13}
& MAD $\downarrow$ & SPR $\uparrow$ & $\rho_{\mathrm{H}}$ $\uparrow$
& MAD $\downarrow$ & SPR $\uparrow$ & $\rho_{\mathrm{H}}$ $\uparrow$
& MAD $\downarrow$ & SPR $\uparrow$ & $\rho_{\mathrm{H}}$ $\uparrow$
& MAD $\downarrow$ & SPR $\uparrow$ & $\rho_{\mathrm{H}}$ $\uparrow$ \\
\midrule
\multicolumn{13}{@{}l}{\itshape Manuscript-Based Review} \\
\cmidrule(lr){1-13}
Manuscript-Standard
& 0.196 & 0.547 & 0.099 & 0.218 & 0.587 & 0.350
& 0.084 & 0.371 & -0.109 & 0.232 & 0.375 & -0.013 \\
ReconstructReview
& 0.112 & 0.625 & 0.182 & 0.214 & 0.507 & 0.301
& 0.240 & 0.375 & -0.094 & 0.289 & 0.402 & 0.052 \\
\midrule
\multicolumn{13}{@{}l}{\itshape Science-Core Branch} \\
\cmidrule(lr){1-13}
Core-Standard
& 0.135 & 0.645 & 0.046 & 0.139 & 0.593 & 0.226
& 0.197 & 0.524 & -0.078 & 0.000 & 0.000 & -- \\
Core-Strict
& 0.117 & 0.626 & 0.178 & 0.164 & 0.586 & 0.428
& 0.220 & 0.537 & 0.084 & 0.000 & 0.000 & -- \\
Core-Persistent
& 0.178 & 0.581 & 0.170 & 0.250 & 0.505 & 0.326
& 0.193 & 0.503 & 0.192 & 0.003 & 0.000 & -- \\
Core-Adapted
& 0.141 & 0.633 & 0.095 & 0.175 & 0.537 & 0.320
& 0.171 & 0.543 & 0.041 & 0.369 & 0.394 & 0.026 \\
\bottomrule
\end{tabular*}
\endgroup
\end{table}

\subsection{Cross-Backbone Generalization of the Science-Core Branch}
\label{sec:cross_backbone}

Table~\ref{tab:scicore_cross_backbone_main} compares Core-Adapted with Manuscript-Standard under the same backbone. We use \textsc{Standard} as a common baseline rather than selecting the strongest direct-review protocol separately for each model. Core-Adapted improves ICC for all three backbones. GPT-5.5 improves on all five robustness metrics, GPT-5-mini improves MAD, Drift SD, and ICC but weakens SPR and discriminability, and GLM-5.2 improves all seven reported metrics. Human alignment does not improve consistently.

\begin{table}[!t]
\centering
\caption{\textbf{Cross-backbone evaluation of the science-core branch.} For each backbone, Manuscript-Standard applies the \textsc{Standard} protocol to the full paper, whereas Core-Adapted uses the same backbone for science-core extraction and review. Bold marks the better result within each backbone pair.}
\label{tab:scicore_cross_backbone_main}
\begingroup
\scriptsize
\setlength{\tabcolsep}{2.2pt}
\renewcommand{\arraystretch}{0.96}
\begin{tabular*}{\textwidth}{@{\extracolsep{\fill}}ll|cc|cc|ccc@{}}
\toprule
\multirow{2}{*}{\textbf{Backbone}} & \multirow{2}{*}{\textbf{Configuration}} &
\multicolumn{2}{c}{\textbf{Human alignment}} &
\multicolumn{2}{c}{\textbf{Within-paper stability}} &
\multicolumn{3}{c}{\textbf{Joint stability-discrimination}} \\
\cmidrule(lr){3-4}\cmidrule(lr){5-6}\cmidrule(lr){7-9}
& & H-MAE $\downarrow$ & Spearman $\uparrow$ &
MAD $\downarrow$ & Drift SD $\downarrow$ & ICC $\uparrow$ &
SPR $\uparrow$ & Discrim. $\uparrow$ \\
\midrule
\multirow{2}{*}{GPT-5.5}
& Manuscript-Standard & \textbf{1.294} & \textbf{0.448} & 0.598 & 0.956 & 0.615 & 0.555 & 0.649 \\
& Core-Adapted & 1.461 & 0.285 & \textbf{0.292} & \textbf{0.650} & \textbf{0.751} & \textbf{0.710} & \textbf{0.695} \\
\cmidrule(lr){1-9}
\multirow{2}{*}{GPT-5-mini}
& Manuscript-Standard & \textbf{1.711} & \textbf{0.417} & 0.543 & 0.945 & 0.401 & \textbf{0.390} & \textbf{0.586} \\
& Core-Adapted & 1.794 & 0.290 & \textbf{0.367} & \textbf{0.866} & \textbf{0.408} & 0.371 & 0.577 \\
\cmidrule(lr){1-9}
\multirow{2}{*}{GLM-5.2}
& Manuscript-Standard & 2.044 & $-0.034$ & 1.500 & 2.341 & 0.226 & 0.322 & 0.558 \\
& Core-Adapted & \textbf{1.912} & \textbf{0.413} & \textbf{1.467} & \textbf{2.169} & \textbf{0.322} & \textbf{0.386} & \textbf{0.591} \\
\bottomrule
\end{tabular*}
\endgroup
\end{table}

The manuscript-side Strict evaluations for these backbones are already included in Table~\ref{tab:benchmark_main}. All reviewer configurations are evaluated on the same matched corpus, which includes rewrites produced independently by GPT-5.5 and Claude Opus 4.8. Because each experiment changes the backbone for both extraction and core review, it measures branch-level transfer without isolating which stage limits performance and does not test the final dual-branch fusion across backbones. The evidence therefore supports partial, backbone-dependent transfer of the science-core branch rather than a universal gain.

\subsection{Science-Core Weight Sensitivity}
\label{sec:scicore_weight_sensitivity}

As a sensitivity analysis of \oursm{}, we alter the science-core weight $\alpha$ to test how the fusion balance affects performance. We examine the science-core branch paired with Manuscript-Standard, Manuscript-Strict, and Manuscript-Persistent reviews:

\begin{equation}
S_{\alpha,p}(x)=\alpha B(x)+(1-\alpha)M_p(x),
\label{eq:scicore_weight_sweep}
\end{equation}

where $B(x)$ is the science-core-branch score and $M_p(x)$ is the manuscript-branch score under protocol $p$. Our prespecified configuration uses Manuscript-Strict with $\alpha=0.5$. The sweep characterizes sensitivity around this fixed design. For descriptive visualization, each metric is direction-aligned and min-max normalized over the full $\alpha\in[0,1]$ sweep.

\begin{center}
  \centering
  \includegraphics[width=0.99\linewidth]{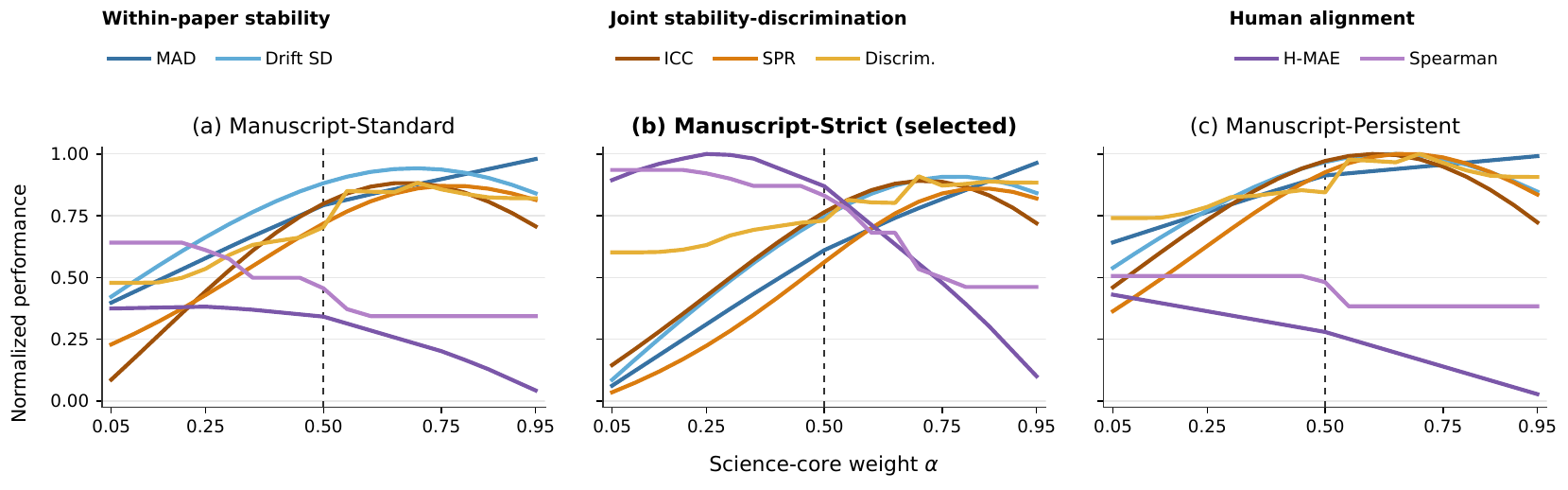}
  \captionof{figure}{\textbf{Science-core weight sensitivity.} Normalized performance across science-core weights under three manuscript protocols. Higher values indicate better performance for each metric. The dashed line marks the prespecified equal-weight fusion ($\alpha=0.5$). Complete numerical results are reported in Tables~\ref{tab:scicore_weight_standard}--\ref{tab:scicore_weight_persistent}.}
  \label{fig:scicore_fusion_weight_ablation}
\end{center}

Larger science-core weights tend to improve robustness while weakening human alignment overall. Across the three manuscript protocols, Manuscript-Persistent favors robustness most strongly, while Manuscript-Strict retains the best human alignment. Under Manuscript-Strict, $\alpha=0.5$ lies near the transition between these two objectives and maintains strong ICC, SPR, and discriminability without a large loss in human alignment.

Equal weighting is a simple and effective default that requires no search and naturally balances the science-core and manuscript branches. The prespecified equal-weight fusion balances rhetorical robustness and human alignment and is Pareto non-dominated among the evaluated protocol-weight combinations on the seven reported point estimates.

\section{Related Work}
\label{sec:related_work}

LLM-as-a-judge and AI peer-review research evaluates human agreement, review quality, score prediction, and workflow capability, while documenting sensitivity to order, length, prompts, and evaluator identity~\citep{liu2023g,zheng2023judging,wang2024large,dubois2024length,liang2024can,zhou2024llm}. Studies of automated review further reveal vulnerabilities to hidden instructions, artificial perturbations, and visible content-preserving revisions~\citep{ye2024we,lin2025breaking,kaneko2026paraphrasing,li2026gaming,baumann2026stop}. \citet{li2026rhetoric} characterize rhetorical reward hacking through controlled full-manuscript rewrites. Our work formulates rhetorical robustness as a joint requirement of within-paper stability and cross-paper discrimination, and introduces \oursm{} to improve this balance while maintaining competitive human alignment.

Existing interventions control particular biases or judge intermediate representations~\citep{dubois2024length,li2026rethinking}. Rather than relying only on instructions to discount rhetoric, \oursm{} augments full-manuscript review with a science-core branch that extracts and evaluates structured scientific content, then averages the two judgments. This retains manuscript-level assessment while adding a content-normalized view, motivated by invariant representation and Blackwell comparison~\citep{eaton1989group,dubois2021lossy,blackwell1953equivalent}. Appendix~\ref{app:related_work} provides the extended discussion.

\section{Conclusion}
\label{sec:conclusion}

We identify rhetorical robustness as an important but comparatively neglected requirement for trustworthy AI reviewers. Within-paper stability is necessary but not sufficient: low drift can be obtained by collapsing scores across papers, so robustness must also preserve between-paper discrimination. \oursb{} operationalizes this requirement with direct within-paper metrics and joint stability-discrimination metrics that relate same-paper consistency to cross-paper variation or separation. It shows that the evaluated content-focused prompting protocol produces model-dependent rather than consistent robustness gains, low score drift can conceal weak discrimination, and human alignment ranks reviewers differently from matched robustness evaluation. Content-focused review is therefore a nontrivial design problem that is not resolved by score-sensitivity or human-agreement measures alone.

\oursm{} provides one method for incorporating this requirement into reviewer design. It augments a conventional full-manuscript judgment with an equally weighted content-normalized judgment from a science-core branch. The extracted science cores are stable and paper-specific under our representation diagnostics, while the fused reviewer achieves a leading joint stability-discrimination profile and competitive human alignment in the primary GPT-5.5 comparison. Branch-level behavior remains criterion- and backbone-dependent. Together, the benchmark and method position rhetorical robustness as a distinct evaluation and design objective for more reliable AI reviewers.

\section{Limitations}

\oursb{} contains 60 ICLR 2026 submissions sampled evenly across score strata from papers with matched arXiv sources, so its results may not generalize to other venues, fields, or manuscript formats. The rhetorical rewrites are designed to preserve reported scientific content but cannot guarantee exact equivalence, particularly under complex conditions; the automated fidelity audit identifies a nonzero mismatch rate. Mean human scores provide only a limited external reference and do not capture disagreement among reviewers. The final \oursm{} fusion is evaluated with GPT-5.5, while the criterion-level and cross-backbone analyses characterize the science-core branch rather than the fused reviewer. Those branch experiments vary extraction and review together rather than isolating the two stages. Science-core extraction can omit or misread scientific details and requires additional inference, and equal-weight score fusion assumes that the two branch assessments are comparable on the shared review scale.

\bibliographystyle{assets/plainnat}
\bibliography{main}

\clearpage
\beginappendix
\section{Theoretical Motivation for \oursm{}}
\label{app:scicore_theory}

This section formalizes the representation and reconstruction arguments motivating the science-core branch. These results provide design rationales rather than performance guarantees for a fixed LLM reviewer. Throughout, $x\in\mathcal{X}$ denotes a generic manuscript realization. When we specialize the discussion to \oursb{}, $x_{i0}$ and $x_{ik}$ retain their definitions from Section~\ref{sec:problem_formulation}, and we suppress the reviewer-configuration index when a branch is fixed.

\subsection{The Maximal-Invariant Ideal}

We write $x\sim_{\mathrm{sci}}x'$ when two realizations express the same reported scientific record, including claims, methods, assumptions, evidence, results, and conclusions, while differing rhetorically. This relation defines the ideal invariance target. Our generated rewrites are designed to approximate it, and their preservation is assessed empirically rather than assumed to be exact.

An ideal representation $C^\star:\mathcal{X}\rightarrow\mathcal{Z}$ is a maximal invariant to rhetorical realization when
\begin{equation}
C^\star(x)=C^\star(x')
\quad\Longleftrightarrow\quad
x\sim_{\mathrm{sci}}x'.
\label{eq:scicore_maximal_invariant}
\end{equation}
The right-to-left implication expresses rhetorical invariance; the left-to-right implication prevents different scientific records from being collapsed into a single representation. We use maximal invariant with respect to the equivalence relation $\sim_{\mathrm{sci}}$. This equivalence-class view follows the standard maximal-invariant principle and its use for invariant downstream prediction~\citep{eaton1989group,dubois2021lossy}.

\begin{proposition}[Factorization through an ideal science core]
\label{prop:scicore_factorization}
If $C^\star$ satisfies Equation~\ref{eq:scicore_maximal_invariant}, every rhetorically invariant judgment $F:\mathcal{X}\rightarrow\mathcal{A}$ factors through $C^\star$: there exists a unique map $R:C^\star(\mathcal{X})\rightarrow\mathcal{A}$ such that $F=R\circ C^\star$. Conversely, every judgment of the form $R\circ C^\star$ is rhetorically invariant.
\end{proposition}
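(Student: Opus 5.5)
The plan is to prove this as the standard factorization of an invariant map through the quotient by an equivalence relation. First I fix the meaning of ``rhetorically invariant'': $F$ is rhetorically invariant when $x\sim_{\mathrm{sci}}x'$ implies $F(x)=F(x')$. I also note that the statement assumes $\sim_{\mathrm{sci}}$ is an equivalence relation, as the paper stipulates. Equation~\ref{eq:scicore_maximal_invariant} then says exactly that $C^\star$ separates equivalence classes: its fibers are precisely the classes of $\sim_{\mathrm{sci}}$.

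\textbf{Existence.} For $z\in C^\star(\mathcal{X})$, choose any $x$ with $C^\star(x)=z$ and set $R(z):=F(x)$. The main point is that this is well defined. Suppose $C^\star(x)=C^\star(x')=z$. The left-to-right direction of Equation~\ref{eq:scicore_maximal_invariant} gives $x\sim_{\mathrm{sci}}x'$, and rhetorical invariance of $F$ then gives $F(x)=F(x')$. Hence $R(z)$ does not depend on which representative is chosen, and $R(C^\star(x))=F(x)$ holds for every $x$ by construction. So $F=R\circ C^\star$. Formally, the choice of representatives can be avoided by defining $R$ as a relation, namely the set of pairs $\{(C^\star(x),F(x)):x\in\mathcal{X}\}$. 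Well-definedness shows this relation is the graph of a function on $C^\star(\mathcal{X})$.

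\textbf{Uniqueness.} Suppose $R\circ C^\star=R'\circ C^\star$. Any $z\in C^\star(\mathcal{X})$ can be written $z=C^\star(x)$, so $R(z)=F(x)=R'(z)$. Uniqueness therefore holds only on the image $C^\star(\mathcal{X})$, which is why the statement restricts the domain of $R$ to that set.

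\textbf{Converse.} Suppose $x\sim_{\mathrm{sci}}x'$. The right-to-left direction of Equation~\ref{eq:scicore_maximal_invariant} gives $C^\star(x)=C^\star(x')$, so $R(C^\star(x))=R(C^\star(x'))$. Every map of the form $R\circ C^\star$ is therefore rhetorically invariant.

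The only step needing care is well-definedness in the existence part. It is the one place that uses the separating (maximality) direction of Equation~\ref{eq:scicore_maximal_invariant}; mere invariance of $C^\star$ would not suffice. I will also remark that the converse uses only the invariance direction, so that part holds for any invariant $C^\star$, whether or not it is maximal.
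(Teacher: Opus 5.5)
Your proposal is correct and follows the paper's proof: define $R(z)=F(x)$ for any $x$ with $C^\star(x)=z$. Well-definedness comes from the maximality direction of Equation~\ref{eq:scicore_maximal_invariant} together with invariance of $F$, and the converse comes from the invariance direction; your explicit uniqueness check, which the paper leaves implicit, is a correct addition.
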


\begin{proof}
For $z=C^\star(x)$, define $R(z)=F(x)$. If $C^\star(x)=C^\star(x')$, maximality gives $x\sim_{\mathrm{sci}}x'$, and invariance of $F$ gives $F(x)=F(x')$; therefore $R$ is well defined. The converse follows from the invariance of $C^\star$.
\end{proof}

Proposition~\ref{prop:scicore_factorization} motivates the science-core branch: an invariant judgment can, in principle, operate on a representation of the scientific equivalence class rather than on a particular manuscript realization. Our implemented extractor $\widehat C$ is not claimed to be maximal or exactly invariant. It instead approximates the two properties in Equation~\ref{eq:scicore_maximal_invariant}: stability across matched variants designed to preserve the reported science and separation across different papers. The complete \oursm{} reviewer does not replace manuscript-based judgment with this representation. It uses the resulting content-normalized judgment as one branch of the final assessment.

\subsection{Conditional Stability of the Science-Core Branch and Fusion}

Specializing the downstream judgment to a numerical overall-assessment score, let $r:\mathcal{Z}\rightarrow\mathbb{R}$ denote the score map applied to the implemented science-core representation, and let $d_{\mathcal Z}$ be a task-relevant distance between extracted representations. If $r$ is $L$-Lipschitz on the observed representation range, then
\begin{equation}
\left|
r\!\left(\widehat C(x_{ik})\right)
-r\!\left(\widehat C(x_{i0})\right)
\right|
\leq
L\,d_{\mathcal Z}\!\left(
\widehat C(x_{ik}),\widehat C(x_{i0})
\right).
\label{eq:scicore_pairwise_stability}
\end{equation}
Averaging over the controlled comparisons gives the corresponding deterministic-score MAD bound
\begin{equation}
\mathrm{MAD}_{r\circ\widehat C}
\leq
\frac{L}{NK}
\sum_{i=1}^{N}\sum_{k=1}^{K}
d_{\mathcal Z}\!\left(
\widehat C(x_{ik}),\widehat C(x_{i0})
\right).
\label{eq:scicore_mad_bound}
\end{equation}
For stochastic reviewing, the same inequality applies to an $L$-Lipschitz conditional mean score map, while realized single-run scores include additional decoding variation. Equation~\ref{eq:scicore_mad_bound} should therefore be interpreted as a deterministic or conditional-mean analogue of the empirical MAD in Appendix~\ref{app:metric_definitions}, not as a bound on every realized review. The relation is not an empirical guarantee because we do not establish exact rewrite preservation, identify the reviewer's task-relevant metric, or estimate $L$. It addresses only direct within-paper stability; whether the reviewer retains distinctions among papers must still be assessed empirically through the joint stability-discrimination metrics.

Let $M(x)$ denote the manuscript-branch score and let $B(x)=r(\widehat C(x))$ denote the science-core-branch score. Applied to \oursb{}, $M(x_{ik})$ and $B(x_{ik})$ are the branch-specific instances of $y_{mik}$ in Section~\ref{sec:problem_formulation}, with the fixed configuration index suppressed. Under the fusion rule in Equation~\ref{eq:scicore_fusion}, the triangle inequality and Equation~\ref{eq:scicore_mad_bound} give
\begin{equation}
\mathrm{MAD}_{S_{\oursm}}
\leq
\frac{1}{2}\mathrm{MAD}_{M}
+\frac{1}{2}\mathrm{MAD}_{B}
\leq
\frac{1}{2}\mathrm{MAD}_{M}
+\frac{L}{2NK}
\sum_{i=1}^{N}\sum_{k=1}^{K}
d_{\mathcal Z}\!\left(\widehat C(x_{ik}),\widehat C(x_{i0})\right).
\label{eq:scicore_fusion_bound}
\end{equation}
This decomposition provides a route to attenuating rhetorical variation: when the science-core branch has lower rewrite-induced MAD than the manuscript branch, their average is bounded by a value below the manuscript branch's MAD. Stable extracted representations can contribute to this condition when the downstream score map is sufficiently regular, while the manuscript branch retains the complete paper context. The bound remains explanatory rather than a performance guarantee, and the fused reviewer must still be evaluated using the joint stability-discrimination metrics.

\subsection{Reconstruction as a Blackwell Garbling}

The invariant-representation view explains why a science core can support rhetorically stable judgment, but it does not by itself distinguish direct review from first realizing the record as another manuscript. Blackwell's comparison of statistical experiments supplies an information-theoretic rationale for avoiding an unnecessary reconstruction layer, rather than a performance guarantee for a fixed reviewer~\citep{blackwell1953equivalent}. Let $Z=\widehat C(X)$ be the extracted record, let $U$ collect any auxiliary assets supplied unchanged to reconstruction, and write $W=(Z,U)$ for the complete reconstruction input. If a reconstructed manuscript is sampled through a channel $\widetilde X\sim Q(\,\cdot\mid W)$, then for any review-relevant state $\Theta$,
\begin{equation}
\Theta\longrightarrow W\longrightarrow\widetilde X
\label{eq:blackwell_chain}
\end{equation}
forms a Markov chain.

\begin{proposition}[Reconstruction is a Blackwell garbling]
\label{prop:blackwell_reconstruction}
Observing the reconstruction input $W$ Blackwell-dominates observing the reconstructed manuscript $\widetilde X$. Consequently, across decision problems, the best attainable risk using $W$ is no worse than the best attainable risk using $\widetilde X$.
\end{proposition}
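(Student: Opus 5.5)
The plan is to apply the definition of Blackwell dominance directly, using the Markov chain in Equation~\ref{eq:blackwell_chain}. Regard $W$ and $\widetilde X$ as two statistical experiments indexed by the state $\Theta$. Their conditional laws are $P_W(\cdot\mid\theta)$ and $P_{\widetilde X}(\cdot\mid\theta)$.

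The first step is to show that $\widetilde X$ is a garbling of $W$. Because reconstruction samples $\widetilde X\sim Q(\cdot\mid W)$ and uses nothing beyond $W$, the conditional law of $\widetilde X$ given $(W,\Theta)$ equals $Q(\cdot\mid W)$. This conditional-independence statement is exactly what the Markov chain asserts. Marginalizing over $W$ then gives
\begin{equation*}
P_{\widetilde X}(A\mid\theta)=\int Q(A\mid w)\,P_W(dw\mid\theta)
\end{equation*}
for every measurable $A$ and every $\theta$. The key point is that the same kernel $Q$, which does not depend on $\theta$, maps the $W$-experiment to the $\widetilde X$-experiment. That is Blackwell's definition of $W$ being more informative than $\widetilde X$, so the first claim needs no appeal to the deep direction of Blackwell's theorem.

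The second step derives the risk statement by simulation, which I will do directly instead of citing the equivalence theorem. Fix any decision problem with action set $\mathcal{A}$ and loss $\ell(\theta,a)$. Take any decision rule $\delta(a\mid\tilde x)$ based on $\widetilde X$, allowing randomized rules. Define a rule based on $W$ by
\begin{equation*}
\delta'(\cdot\mid w)=\int\delta(\cdot\mid\tilde x)\,Q(d\tilde x\mid w).
\end{equation*}
In words, $\delta'$ draws a synthetic reconstruction and then applies $\delta$. By Fubini and the garbling identity, $\delta'$ induces the same joint law of $(\Theta,\text{action})$ as $\delta$. It therefore has identical risk $R(\theta,\delta')=R(\theta,\delta)$ for every $\theta$, and hence identical Bayes risk under any prior.

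Taking the infimum over $\delta$ shows that the best attainable risk using $W$ is at most that using $\widetilde X$. This holds pointwise in $\theta$, so it covers minimax and Bayes criteria alike.

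The main obstacle is measure-theoretic. I must ensure that $Q$ is a genuine Markov kernel from the space of $W$ to the space of $\widetilde X$, and that randomized rules are admitted, so that $\delta'$ is itself a valid decision rule. I also need Fubini to apply, which requires the loss to be bounded below. I will state these as standing regularity assumptions; text spaces are countable, so the issue is mild.

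I will also note that the argument concerns optimal attainable risk only. It does not compare a fixed LLM reviewer applied to $W$ with the same reviewer applied to $\widetilde X$, which is consistent with the paper's caveat.
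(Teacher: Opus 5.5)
Your proposal is correct and follows the paper's own argument. The paper likewise constructs the $W$-based rule $\delta_W(a\mid w)=\int\delta_{\widetilde X}(a\mid\widetilde x)\,Q(d\widetilde x\mid w)$ by simulating a reconstruction, and notes that this rule induces the same conditional law of decisions given $\Theta$; your explicit garbling identity and regularity conditions make precise what the paper's brief proof leaves implicit.
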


\begin{proof}
For any reconstruction-based decision rule $\delta_{\widetilde X}(a\mid\widetilde x)$, an observer of $W$ can first sample $\widetilde X$ from $Q$ and then apply that rule:
\[
\delta_W(a\mid w)
=
\int \delta_{\widetilde X}(a\mid\widetilde x)
Q(d\widetilde x\mid w).
\]
The two procedures induce the same conditional distribution of decisions given $\Theta$, so every risk attainable from $\widetilde X$ is attainable from $W$.
\end{proof}

Thus reconstruction cannot add decision-relevant information absent from its inputs, although it can discard information or introduce another rhetorical realization. This proposition establishes only the information ordering between the complete reconstruction input $W$ and its reconstructed output $\widetilde X$. It does not imply that a fixed LLM will review $W$ more effectively than $\widetilde X$. Moreover, the implemented reconstruction arm receives permitted figures and bibliography in addition to $Z$, whereas the science-core branch reviews only $Z$. The proposition therefore motivates the ReconstructReview ablation but does not order the two implemented pipelines; their comparison is empirical.

\clearpage
\section{Extended Related Work}
\label{app:related_work}

\subsection{Reliability and Robustness of LLM-as-a-Judge Systems}

LLM-as-a-judge methods operationalize evaluation through natural-language
rubrics, pairwise decisions, and task-specific evaluator models. Systems such
as G-Eval, MT-Bench, and Prometheus show that these designs can reproduce human
preferences and provide criterion-specific feedback
\citep{liu2023g,zheng2023judging,kim2024prometheus}. Their judgments are
nevertheless affected by candidate order, response length, prompt wording, and
evaluator identity \citep{wang2024large,dubois2024length,stureborg2024large,
panickssery2024llm}. Further variation arises from epistemic markers and
semantically equivalent evaluation instructions
\citep{lee2025llm,bhat2026all}, with meta-evaluation revealing substantial
dependence on the judge, task, property, and data source
\citep{bavaresco2025llms}.

Prior work has also begun to intervene on the evaluation process itself.
Length-controlled evaluation reduces a known presentation bias
\citep{dubois2024length}, while representation-based judging replaces direct
generation with an intermediate semantic representation
\citep{li2026rethinking}. These approaches suggest that robustness can depend
on what information reaches the judge, not only on which rubric the judge
receives. Existing studies, however, largely address generic evaluation tasks
or isolated biases. We instead test scientific reviewers for both stability
across rewrites designed to preserve reported scientific content and
discrimination across papers, and test whether a science-core branch improves
this balance when combined with manuscript-based review.

Our theoretical view draws on two complementary traditions. Maximal invariants
represent equivalence classes while retaining the information needed by
invariant downstream tasks \citep{eaton1989group,dubois2021lossy}. Blackwell's
comparison of experiments orders observations by the decision risks they make
attainable and identifies stochastic post-processing as a garbling of its
source \citep{blackwell1953equivalent}. We use these results as design
rationales for the science-core branch in \oursm{} and for avoiding manuscript
reconstruction within that branch, not as performance guarantees for a fixed LLM reviewer.

\subsection{LLMs for Scientific Peer Review}

Before modern LLMs, computational peer-review research focused on data
collection, decision prediction, and structured feedback generation. PeerRead linked manuscripts
with expert reports, aspect scores, and publication decisions for predictive
and analytical tasks \citep{kang2018dataset}. ReviewRobot and ReviewAdvisor
then moved toward evidence-linked and aspect-specific critique
\citep{wang2020reviewrobot,yuan2022can}. This progression expanded the modeled
review workflow beyond acceptance prediction, while leaving reliable,
decision-relevant criticism as a central challenge.

Modern LLMs extend this progression from modeling individual review components
to generating complete natural-language reviews at scale. In a large-scale
study, GPT-4 feedback overlaps with human review comments at rates comparable
to the overlap between human reviews, and many authors report finding such feedback useful
\citep{liang2024can}. More direct evaluations remain qualified: LLMs struggle
with long-paper processing, zero-shot scoring, and consistently correct
criticism \citep{zhou2024llm}; they reproduce summaries and stated strengths
more readily than substantive weaknesses, discriminating questions, or
differences in paper quality \citep{li2025unveiling}. Review quality also
depends on prompting, retrieval, and reviewer-guideline design
\citep{chen2026peercheck,li2026evaluating}. At the same time, a randomized
deployment shows that AI feedback can improve the specificity and actionability
of human reviews \citep{thakkar2026large}. Together, these results support AI as
a review aid but leave open whether its scientific judgments are stable under
content-preserving rhetorical changes.

Recent specialized models and agentic systems seek stronger reviewing through
fine-tuning, multi-stage reasoning, retrieval, verification, reviewer
ensembles, or proactive evidence gathering
\citep{idahl-ahmadi-2025-openreviewer,weng2025cycleresearcherimprovingautomatedresearch,zhu-etal-2025-deepreview,
lu2024aiscientistfullyautomated,openjudge2025,fang2026passivegenerationinvestigationproactive}. Their evaluations primarily
emphasize review quality, similarity to human feedback, score prediction, or
workflow capability. These goals are complementary to rhetorical robustness:
a review may be detailed and plausible yet still assign different scientific
scores to different presentations of the same work. Such sensitivity makes
merit judgments depend on rhetorical choices and undermines comparability
across submissions. We therefore evaluate
general-purpose LLM reviewers, specialized review models, and agentic review systems under
the same controlled matched-family design.

\subsection{Manipulation and Presentation Sensitivity in AI Scientific Review}

Scientific evaluation legitimately responds to clarity and exposition, but
these criteria should not be conflated with methodological soundness or
scientific contribution \citep{james2024rigour,li2025developing}. This
distinction is difficult to study with naturally occurring papers because
scientific merit and writing quality are entangled. Controlled interventions
provide a more direct test by varying presentation through revisions designed
to preserve the reported scientific content.

One line of work studies overtly adversarial manuscript interventions. Hidden
instructions can inflate ratings, suppress criticism, or redirect generated
reviews \citep{ye2024we,collu2026misleading}, while character-, word-, and
sentence-level perturbations can distort review judgments
\citep{lin2025breaking}. Such attacks establish serious vulnerabilities but
differ from ordinary academic rewriting because they rely on concealed
instructions or artificial perturbations. Related evidence also shows that
metadata and other manuscript-side factors can introduce bias into automated
review \citep{vasu2026justice,zhu2025your}.

A more closely related line examines visible revisions intended to preserve
reported scientific content.
Even title-level stylistic variants can alter AI-review scores
\citep{du-2025-titletrap}. At the abstract and full-paper levels, paraphrasing,
rewriting, and overclaiming can be optimized to improve automated ratings
\citep{kaneko2026paraphrasing,li2026gaming,wang2026emerging}, and recent work describes
full-manuscript \emph{paper laundering} or \emph{adversarial repackaging} that
raises review scores without new experiments
\citep{baumann2026stop,yang2026no}. More general attacks similarly learn
meaning-preserving stylistic edits that exploit a particular LLM judge
\citep{yang2026turning}.

\citet{li2026rhetoric} construct controlled full-manuscript rewrites to
characterize how rhetorical choices reward-hack AI reviewers. Using complete
manuscripts, our investigation evaluates within-paper stability, cross-paper
discrimination, and human alignment. We introduce \oursm{}, which combines
manuscript-level assessment with a content-normalized scientific judgment,
and examine direct science-core review against manuscript reconstruction. Complementarily,
\citet{dycke2026automatic}
changes substantive reasoning to test defect detection, whereas our
counterfactuals are constructed to preserve reported reasoning and evidence
when testing rhetorical invariance. Sensitivity to substantive defects and invariance to content-preserving rhetorical changes are complementary requirements for robust review.

\clearpage
\section{Experimental Details}
\label{app:experimental_details}

This section provides additional details on \oursb{} construction and
experiments, \oursm{}, and its ablation studies. Our rhetorical intervention
design, rewrite construction procedures, and manuscript-review protocols
follow the methodology of~\citet{li2026rhetoric}, with prompts instantiated
for the present benchmark. All manuscript rewrites and
model-review scores reported here were generated for this study.

\subsection{\oursb{} Composition and Sampling}
\label{app:benchmark_inventory}

We construct \oursb{} from ICLR 2026 submissions using metadata downloaded through the OpenReview API. We first identify submissions for which title matching yields a corresponding arXiv paper with an available source package. To ensure that the matched arXiv manuscript closely corresponds to the ICLR submission, we require the text similarity between the ICLR submission and at least one current or historical arXiv version to exceed 0.8. Among submissions satisfying these criteria, we stratify papers by their mean human overall-assessment rating. Using a fixed random seed of 42, we randomly sample 10 papers from each of the six rating intervals $[1,3)$, $[3,4)$, $[4,5)$, $[5,6)$, $[6,7)$, and $[7,8.5]$, yielding 60 source papers. Equal allocation across these intervals ensures coverage of lower-, middle-, and higher-rated submissions. We retain the mean human ratings as the reference for the human-alignment evaluation. The benchmark focuses on ICLR 2026 submissions with matched arXiv sources. Generalization to other venues, fields, and manuscript formats remains to be evaluated.

We construct 10 rhetorical rewrite conditions for each source paper. Six conditions each apply a positive-direction transformation to a single rhetorical dimension. \emph{Novelty stance} changes how strongly the novelty of the work is presented. \emph{Scope framing} changes how broadly or narrowly the scope and generalizability of the work are presented. \emph{Evidence framing} changes how the reported evidence and quantitative results are characterized. \emph{Contribution salience} changes the prominence and organization of the stated contributions. \emph{Technical register} changes the style and degree of technical and formal presentation. \emph{Linguistic complexity} changes lexical and syntactic realization. The remaining 4 conditions introduce broader transformations. \emph{Joint rewrite} varies multiple rhetorical dimensions together. \emph{Recursive rewrite (R2)} and \emph{recursive rewrite (R3)} correspond to 2 and 3 successive rounds of rewriting, respectively. \emph{Reviewer-guided rewrite} uses reviewer feedback to guide the rhetorical transformation.

Each condition contains 2 independently generated variants for each of the 60 source papers, yielding 120 variants per condition and 1,200 rhetorical variants across all 10 conditions. Together with the 60 original manuscripts, \oursb{} contains 1,260 full-manuscript PDFs. During evaluation, each variant is paired with the corresponding original manuscript from the same source-paper family, yielding 120 matched original-variant pairs per condition and 1,200 matched pairs overall.

\subsection{Rewrite Construction and Structural Controls}
\label{app:rewrite_construction}

Each rhetorical condition is instantiated independently by GPT-5.5 through Codex CLI and Claude Opus 4.8~\citep{anthropic_claude_opus48} through Claude Code. Rewriting is performed on the complete \LaTeX{} project associated with each source paper rather than on isolated sections or extracted text. The transformation instructions require the models to preserve the reported scientific content, including claims, methods, evidence, numerical results, findings, and conclusions, while allowing the targeted changes in rhetorical framing, organization, wording, and presentation, including how quantitative evidence or tables are described and presented.

Citations, cross-references, figures, bibliography files, and other project dependencies are protected during rewriting. Each transformed project is subsequently recompiled to verify structural validity. Outputs that fail structural or compilation checks are repaired when possible and otherwise excluded from the benchmark.
These controls are designed to permit substantial changes in presentation
while minimizing unintended changes to the reported scientific content.

\subsection{Reviewer Execution Details}
\label{app:reviewer_execution}

We evaluate eight general-purpose LLMs prompted as scientific reviewers:
GPT-5.5~\citep{openai_gpt55}, GPT-5-mini~\citep{openai_gpt5mini}, Claude
Sonnet 5~\citep{anthropic_claude_sonnet5}, GLM-5.2~\citep{glm5team2026glm5vibecodingagentic},
Kimi-K2.6~\citep{kimi_k2_6}, GPT-OSS-120B~\citep{openai2025gptoss120bgptoss20bmodel},
Gemini-3.5-Flash-Lite~\citep{google_gemini35flashlite}, and Qwen-3.5-Flash~\citep{qwen3.5flash}.
This reflects a common use case in which a general-purpose LLM is directly
instructed to review a scientific manuscript. Each model is evaluated under
three protocols. \textsc{Standard} follows the standard ICLR review criteria
and scoring scheme. \textsc{Strict} and \textsc{Persistent} are derived from
\textsc{Standard}: \textsc{Strict} applies a more demanding, evidence-focused
rubric, and \textsc{Persistent} repeatedly
emphasizes that scientific judgments should depend on substantive scientific
content rather than rhetorical presentation. We additionally evaluate three
specialized scientific-review models: OpenReviewer~\citep{idahl-ahmadi-2025-openreviewer},
CycleReviewer~\citep{weng2025cycleresearcherimprovingautomatedresearch}, and
DeepReviewer~\citep{zhu-etal-2025-deepreview}. We also evaluate three agentic
review systems: AI Scientist~\citep{lu2024aiscientistfullyautomated},
OpenJudge~\citep{openjudge2025}, and ProReviewer~\citep{fang2026passivegenerationinvestigationproactive}.
The specialized and agentic systems use their native review procedures.

All prompted-review experiments operate directly on the manuscript PDFs. GPT-5.5 and GPT-5-mini are accessed through the OpenAI Responses API,\footnote{\url{https://platform.openai.com/docs/api-reference/responses}} using the default API settings without overriding sampling or generation parameters. Claude Sonnet 5, GLM-5.2, Kimi-K2.6, GPT-OSS-120B, Gemini-3.5-Flash-Lite, and Qwen-3.5-Flash are accessed through the OpenRouter API.\footnote{\url{https://openrouter.ai/docs/features/multimodal/pdfs}} For GLM-5.2, Kimi-K2.6, and GPT-OSS-120B, the reasoning effort is set to \texttt{high}. All other generation and inference parameters are left at their OpenRouter defaults. Each review request consists of the complete manuscript PDF together with the corresponding review prompt. For models accessed through OpenRouter, PDF ingestion and processing are handled by OpenRouter's file-processing pipeline. 

The specialized reviewer systems are evaluated using their officially released checkpoints and inference procedures. Specifically, we use Llama-OpenReviewer-8B for OpenReviewer, CycleReviewer-ML-Llama-3.1-8B for CycleReviewer, and DeepReviewer-14B for DeepReviewer. CycleReviewer and DeepReviewer follow their released multi-reviewer evaluation procedures, with the resulting reviewer scores aggregated according to their native implementations. These models are served locally on a server equipped with 8 NVIDIA A100 GPUs.

For specialized reviewers whose released interfaces do not accept PDF input,
we first convert each manuscript to Markdown with the Datalab OCR API\footnote{
\url{https://documentation.datalab.to/}} and pass the resulting Markdown to
the model's native inference pipeline.

The agentic review systems, AI Scientist, OpenJudge, and ProReviewer, are evaluated using their officially released implementations and configurations. We retain the model backbones, inference parameters, review workflows, and score extraction procedures specified by their respective implementations without additional modification.

\textbf{Single-review execution.} We use one independent review per manuscript version in the main experiments, as repeated reviewing would substantially increase the computational cost of \oursb{}. We conduct an auxiliary audit with GPT-5-mini under the Standard protocol, covering all 60 source papers, both rewrite producers, and the six single-dimension rewrite conditions, for a total of 720 rewritten manuscripts. For each manuscript, we compare a single review with the mean of three independent reviews of the same PDF. Averaging three reviews changes the mean OA from 6.493 to 6.457. The 12 condition--producer mean effect estimates are closely aligned between the two settings, with Pearson and Spearman correlations of 0.975 and 0.944, respectively. The mean absolute change in these effects is 0.056 OA points, with a maximum change of 0.117. This audit supports the consistency of condition-level mean effects under the evaluated configuration. Our robustness metrics characterize observed score variation under single-review execution, including stochastic generation variability.

Appendix~\ref{app:repeated_review_noise} reports the same-PDF repeatability baseline for the manuscript protocols and \oursm{}.

\subsection{Full Evaluation Metric Definitions}
\label{app:metric_definitions}

All configurations are evaluated on the same 60 complete paper families (1,260 manuscript versions), with no missing overall-assessment scores.

For reviewer configuration $m$ and a given score dimension, we define the
rewrite-induced score change as
\begin{equation}
\Delta_{mik}=y_{mik}-y_{mi0},
\qquad k=1,\ldots,K.
\label{eq:score_drift}
\end{equation}

\textbf{Direct within-paper stability.}
MAD measures the average magnitude of score changes induced by rhetorical rewriting:
\begin{equation}
\mathrm{MAD}_m
=
\frac{1}{NK}
\sum_{i=1}^{N}
\sum_{k=1}^{K}
\left|\Delta_{mik}\right|.
\label{eq:mad}
\end{equation}
Drift SD measures the variability of the signed score changes:
\begin{equation}
\mathrm{DriftSD}_m
=
\operatorname{SD}_{i,k}
\left(\Delta_{mik}\right).
\label{eq:drift_sd}
\end{equation}
Lower values indicate greater rhetorical stability. MAD captures the overall
magnitude of score drift, whereas Drift SD captures its heterogeneity across
papers and rewrite conditions.

\textbf{Joint stability-discrimination.}
Direct within-paper stability is insufficient when low drift is produced by
score collapse. We therefore use three complementary metrics that relate
within-paper consistency to cross-paper variation or separation. These are
joint metrics rather than between-paper-only measures. ICC measures
whether different presentations of the same paper receive similar scores
relative to differences across papers. We use the two-way mixed-effects,
absolute-agreement, single-measure form, ICC(A,1):
\begin{equation}
\mathrm{ICC}_m(A,1)
=
\frac{MS_R-MS_E}
{MS_R+(P-1)MS_E+\frac{P}{N}(MS_C-MS_E)},
\label{eq:icc}
\end{equation}
where $MS_R$, $MS_C$, and $MS_E$ denote the paper, presentation, and residual
mean squares, respectively; $N$ is the number of papers; and $P=K+1$ is the
number of presentations including the original manuscript. Higher ICC
indicates that between-paper differences are large relative to presentation-
induced and residual variation.

We further define the signal preservation ratio (SPR) as
\begin{equation}
\mathrm{SPR}_m
=
\frac{
\operatorname{Var}_{i}\left(y_{mi0}\right)
}{
\operatorname{Var}_{i}\left(y_{mi0}\right)
+
\mathbb{E}_{i,k}
\left[\Delta_{mik}^{2}\right]
}.
\label{eq:spr}
\end{equation}
SPR compares the cross-paper score variation in the original manuscripts with
the magnitude of rewrite-induced movement. Higher values indicate that
cross-paper variation remains large relative to within-paper rhetorical
perturbation.
When both the original-paper score variance and the rewrite-induced mean
squared drift are zero, we define SPR as zero, reflecting the absence of
between-paper discrimination.

Finally, discriminability measures whether presentations of the same paper are
closer in score than presentations of different papers:
\begin{equation}
\begin{aligned}
\mathrm{Disc}_m
=
\mathbb{E}_{\substack{
i\neq j,\; a\neq b \\
a,b,c\in\{0,\ldots,K\}
}}
\Big[
&\mathbf{1}
\left(
|y_{mia}-y_{mib}|
<
|y_{mia}-y_{mjc}|
\right)
\\
&+
\frac{1}{2}
\mathbf{1}
\left(
|y_{mia}-y_{mib}|
=
|y_{mia}-y_{mjc}|
\right)
\Big].
\end{aligned}
\label{eq:discriminability}
\end{equation}
Here, $a$ and $b$ index two presentations of paper $i$, while $c$ indexes a
presentation of a different paper $j$. A value of $0.5$ is chance-like, while
higher values indicate stronger separation between same-paper and different-
paper presentations.

ICC, SPR, and discriminability operationalize the same joint requirement in
different ways: each rewards consistency across presentations of the same
paper only when distinctions across papers remain detectable. None treats
cross-paper score variation as ground-truth scientific merit.

\textbf{Human judgment alignment.}
Let $h_i$ denote the mean human OA score for the original version of paper
$i$. Human MAE measures absolute agreement between AI and human scores:
\begin{equation}
\mathrm{HMAE}_m
=
\frac{1}{N}
\sum_{i=1}^{N}
\left|y_{mi0}-h_i\right|.
\label{eq:human_mae}
\end{equation}
We additionally measure rank agreement using Spearman rank correlation:
\begin{equation}
\rho^{\mathrm{human}}_m
=
\rho_{\mathrm{S}}
\left(
\left(y_{mi0}\right)_{i=1}^{N},
\left(h_i\right)_{i=1}^{N}
\right).
\label{eq:human_spearman}
\end{equation}
Lower Human MAE and higher Spearman rank correlation indicate stronger
agreement with human reviewer judgments. These metrics assess agreement with mean human ratings and do not characterize disagreement among individual reviewers.
Spearman correlation is undefined when either input is constant; ICC is
undefined when its denominator is zero. We report these cases as dashes.

\subsection{\oursm Execution Protocol}
\label{app:scicore_execution}

The Core-Adapted prompt, the Manuscript-Strict branch, and the equal-weight fusion rule were fixed before examining results on the 60-paper evaluation panel.

The \oursm{} experiments follow the same model-access and inference settings as the corresponding direct-review experiments described above. The science-core branch first uses GPT-5.5 to extract the science core from the manuscript and then reviews the extracted record with the adapted protocol. The manuscript branch is the GPT-5.5 \textsc{Strict} review of the complete PDF reported in the main benchmark. The final overall assessment is computed as the unweighted arithmetic mean of the manuscript-branch and science-core-branch scores. Both branches use the same ICLR overall-assessment scale, and their arithmetic fusion assumes that scores are comparable across branches. No additional model call is required for fusion. In the branch-policy ablation study, each extracted science core is reused across the evaluated policies; the reviewer backbone and input representation remain the same while the review instructions vary.

\subsection{ReconstructReview Ablation}
\label{app:reconstruct_review}

ReconstructReview evaluates whether the extracted record should be reviewed directly within the science-core branch or first realized as another manuscript. For each manuscript presentation, GPT-5.5 first extracts the science core using the same extraction procedure as \oursm{}. The extracted content is then provided to GPT-5.5 through Codex, which reconstructs it into a complete anonymous ICLR 2026 submission using the official conference template. The reconstruction workspace contains the extracted science core as the authoritative scientific record, together with the permitted bibliography, available scientific figures, and the ICLR 2026 template. The original manuscript prose is not provided to the reconstruction model.

The reconstruction model is given substantial authorial freedom over the title, framing, organization, mathematical presentation, allocation between the main text and appendix, and the selection and placement of figures and tables. At the same time, it is explicitly instructed to preserve the scientific record, retain nonredundant methods and experimental evidence, preserve reported numerical results and qualifications, and avoid introducing unsupported experiments, claims, citations, or implementation details. The resulting \LaTeX{} project is compiled into a full-manuscript PDF and reviewed using the Standard protocol.

We apply this reconstruction procedure independently to every presentation in \oursb{}, including the original baseline and the variants produced by both rewrite models under all 10 rhetorical conditions. This yields 21 reconstructed manuscripts per source paper and 1,260 reconstructed manuscripts in total. The resulting scores are then analyzed using the same matched original-variant evaluation procedure as in \oursb{}.

\clearpage
\section{Complete \oursb{} Results}
\label{app:complete_benchmark_results}

This section reports the complete baseline evidence supporting the compact
comparisons in the main paper. Baseline reviewer models and protocols are kept
separate from the backbone-transfer experiments for \oursm.

\subsection{Full Score-Dimension Results}
\label{app:score_dimension_results}

Complete OA results are already reported in Table~\ref{tab:benchmark_main} and
are not repeated here. The tables below use the same seven metrics and column
order as the main table, but restrict the evaluated target to soundness,
presentation, contribution, or confidence. Human alignment is computed against
the mean score from the official reviews of each \oursb{} source paper. Presentation
scores were recovered directly from the OpenReview API; the other three
secondary-score means exactly match the archived metadata. MAD and Drift SD
directly measure within-paper stability, whereas ICC, SPR, and discriminability
are joint stability-discrimination metrics. The GPT-5-mini entries are
recomputed from their complete paper-by-rewrite score matrices rather than from
the earlier aggregate-only export. External reviewers and the final
\oursm{} fusion expose only OA-compatible outputs and are therefore not included
in these secondary-score tables.

\clearpage
\begin{table}[!t]
  \centering
  \caption{\textbf{Full Soundness results for secondary scores.} Metrics and column order match \Cref{tab:benchmark_main}. The best result in each column is shown in bold, the second-best is underlined, and the third-best is italicized.}
  \label{tab:benchmark_soundness_full}
  \begingroup
  \resizebox{\linewidth}{!}{%
  \begin{tabular}{@{}ll|cc|cc|ccc@{}}
    \toprule
    \multirow{2}{*}{\textbf{System}} & \multirow{2}{*}{\textbf{Protocol}} &
    \multicolumn{2}{c}{\textbf{Human alignment}} &
    \multicolumn{2}{c}{\textbf{Within-paper stability}} &
    \multicolumn{3}{c}{\textbf{Joint stability-discrimination}} \\
    \cmidrule(lr){3-4}\cmidrule(lr){5-6}\cmidrule(lr){7-9}
    & & H-MAE $\downarrow$ & Spearman $\uparrow$ &
    MAD $\downarrow$ & Drift SD $\downarrow$ & ICC $\uparrow$ &
    SPR $\uparrow$ & Discrim. $\uparrow$ \\
    \midrule
    \multicolumn{9}{@{}l}{\itshape General-purpose LLM reviewers} \\
    \cmidrule(lr){1-9}
      \multirow[t]{3}{*}{GPT-5.5} & Standard & 0.529 & 0.099 & 0.196 & 0.432 & 0.520 & \textit{0.547} & 0.614 \\
       & Strict & 0.583 & 0.273 & 0.272 & 0.494 & 0.494 & 0.441 & 0.627 \\
       & Persistent & 0.549 & 0.070 & 0.227 & 0.481 & 0.531 & 0.532 & 0.619 \\
    \cmidrule(lr){1-9}
      \multirow[t]{3}{*}{GPT-5-mini} & Standard & \textbf{0.401} & 0.214 & \textbf{0.071} & \textbf{0.262} & 0.061 & 0.401 & 0.501 \\
       & Strict & 0.539 & 0.222 & 0.334 & 0.536 & 0.384 & 0.428 & 0.584 \\
       & Persistent & \underline{0.459} & 0.183 & 0.179 & 0.411 & 0.209 & 0.486 & 0.512 \\
    \cmidrule(lr){1-9}
      \multirow[t]{3}{*}{Claude Sonnet 5} & Standard & \textit{0.464} & \textbf{0.368} & 0.247 & 0.491 & 0.541 & 0.528 & \textit{0.631} \\
       & Strict & 0.570 & 0.175 & 0.208 & 0.448 & \underline{0.577} & \textbf{0.576} & \textbf{0.644} \\
       & Persistent & 0.578 & 0.196 & 0.208 & 0.450 & \textit{0.566} & 0.535 & \underline{0.642} \\
    \cmidrule(lr){1-9}
      \multirow[t]{3}{*}{GLM-5.2} & Standard & 0.534 & 0.118 & 0.477 & 0.867 & 0.222 & 0.314 & 0.540 \\
       & Strict & 0.778 & $-0.116$ & 0.624 & 0.894 & 0.211 & 0.385 & 0.549 \\
       & Persistent & 0.836 & $-0.222$ & 0.677 & 1.013 & 0.227 & 0.385 & 0.547 \\
    \cmidrule(lr){1-9}
      \multirow[t]{3}{*}{Kimi-K2.6} & Standard & 0.550 & 0.194 & 0.430 & 0.786 & 0.175 & 0.404 & 0.530 \\
       & Strict & 0.766 & 0.008 & 0.487 & 0.735 & 0.229 & 0.380 & 0.552 \\
       & Persistent & 0.667 & 0.007 & 0.469 & 0.714 & 0.192 & 0.391 & 0.543 \\
    \cmidrule(lr){1-9}
      \multirow[t]{3}{*}{GPT-OSS-120B} & Standard & 0.500 & $-0.286$ & \textit{0.167} & \textit{0.405} & 0.051 & 0.376 & 0.503 \\
       & Strict & 0.649 & $-0.063$ & 0.424 & 0.654 & 0.136 & 0.348 & 0.532 \\
       & Persistent & 0.498 & 0.107 & 0.370 & 0.621 & 0.165 & 0.391 & 0.531 \\
    \cmidrule(lr){1-9}
      \multirow[t]{3}{*}{Gemini-3.5-Flash-Lite} & Standard & 1.076 & 0.212 & \underline{0.167} & \underline{0.391} & 0.256 & 0.490 & 0.522 \\
       & Strict & 0.612 & \underline{0.318} & 0.338 & 0.511 & 0.400 & 0.437 & 0.599 \\
       & Persistent & 0.893 & \textit{0.312} & 0.296 & 0.508 & 0.365 & 0.477 & 0.563 \\
    \cmidrule(lr){1-9}
      \multirow[t]{3}{*}{Qwen-3.5-Flash} & Standard & 0.943 & $-0.105$ & 0.323 & 0.562 & 0.260 & 0.418 & 0.555 \\
       & Strict & 0.621 & 0.220 & 0.359 & 0.607 & 0.226 & 0.360 & 0.548 \\
       & Persistent & 0.803 & 0.212 & 0.338 & 0.560 & \textbf{0.632} & \underline{0.566} & 0.590 \\
    \bottomrule
  \end{tabular}
  }
  \endgroup
\end{table}

\begin{table}[!t]
  \centering
  \caption{\textbf{Full Presentation results for secondary scores.} Metrics and column order match \Cref{tab:benchmark_main}. The best result in each column is shown in bold, the second-best is underlined, and the third-best is italicized. A dash denotes an undefined Spearman correlation.}
  \label{tab:benchmark_presentation_full}
  \begingroup
  \resizebox{\linewidth}{!}{%
  \begin{tabular}{@{}ll|cc|cc|ccc@{}}
    \toprule
    \multirow{2}{*}{\textbf{System}} & \multirow{2}{*}{\textbf{Protocol}} &
    \multicolumn{2}{c}{\textbf{Human alignment}} &
    \multicolumn{2}{c}{\textbf{Within-paper stability}} &
    \multicolumn{3}{c}{\textbf{Joint stability-discrimination}} \\
    \cmidrule(lr){3-4}\cmidrule(lr){5-6}\cmidrule(lr){7-9}
    & & H-MAE $\downarrow$ & Spearman $\uparrow$ &
    MAD $\downarrow$ & Drift SD $\downarrow$ & ICC $\uparrow$ &
    SPR $\uparrow$ & Discrim. $\uparrow$ \\
    \midrule
    \multicolumn{9}{@{}l}{\itshape General-purpose LLM reviewers} \\
    \cmidrule(lr){1-9}
      \multirow[t]{3}{*}{GPT-5.5} & Standard & 0.441 & $-0.109$ & 0.084 & 0.287 & 0.232 & 0.371 & 0.510 \\
       & Strict & \textbf{0.393} & 0.220 & \underline{0.044} & \textit{0.208} & \textit{0.306} & 0.271 & 0.515 \\
       & Persistent & \textit{0.402} & \textit{0.285} & 0.085 & 0.284 & \underline{0.402} & 0.440 & 0.529 \\
    \cmidrule(lr){1-9}
      \multirow[t]{3}{*}{GPT-5-mini} & Standard & 0.410 & 0.181 & 0.114 & 0.324 & 0.263 & 0.220 & 0.526 \\
       & Strict & \underline{0.402} & -- & \textit{0.044} & \underline{0.205} & 0.067 & 0.000 & 0.502 \\
       & Persistent & 0.446 & 0.043 & 0.098 & 0.308 & 0.155 & 0.453 & 0.506 \\
    \cmidrule(lr){1-9}
      \multirow[t]{3}{*}{Claude Sonnet 5} & Standard & 0.491 & 0.033 & 0.185 & 0.415 & 0.241 & 0.447 & 0.526 \\
       & Strict & 0.410 & 0.142 & 0.072 & 0.264 & 0.186 & 0.398 & 0.508 \\
       & Persistent & 0.410 & 0.182 & 0.110 & 0.323 & 0.091 & 0.410 & 0.504 \\
    \cmidrule(lr){1-9}
      \multirow[t]{3}{*}{GLM-5.2} & Standard & 0.574 & $-0.044$ & 0.464 & 0.864 & 0.201 & 0.309 & 0.535 \\
       & Strict & 0.824 & $-0.243$ & 0.628 & 0.985 & 0.165 & 0.407 & 0.532 \\
       & Persistent & 0.807 & $-0.195$ & 0.673 & 1.033 & 0.222 & 0.371 & \underline{0.544} \\
    \cmidrule(lr){1-9}
      \multirow[t]{3}{*}{Kimi-K2.6} & Standard & 0.637 & 0.040 & 0.517 & 0.851 & 0.190 & 0.406 & 0.531 \\
       & Strict & 0.707 & $-0.159$ & 0.582 & 0.864 & 0.220 & 0.354 & \textbf{0.548} \\
       & Persistent & 0.804 & $-0.099$ & 0.588 & 0.834 & 0.213 & 0.390 & 0.542 \\
    \cmidrule(lr){1-9}
      \multirow[t]{3}{*}{GPT-OSS-120B} & Standard & 0.468 & $-0.169$ & 0.155 & 0.396 & 0.037 & 0.294 & 0.502 \\
       & Strict & 0.529 & $-0.086$ & 0.333 & 0.601 & 0.062 & 0.316 & 0.507 \\
       & Persistent & 0.510 & $-0.037$ & 0.302 & 0.550 & 0.090 & 0.379 & 0.516 \\
    \cmidrule(lr){1-9}
      \multirow[t]{3}{*}{Gemini-3.5-Flash-Lite} & Standard & 1.196 & 0.216 & \textbf{0.016} & \textbf{0.125} & 0.135 & \textbf{0.509} & 0.501 \\
       & Strict & 1.046 & \textbf{0.364} & 0.156 & 0.374 & 0.137 & \textit{0.471} & 0.506 \\
       & Persistent & 1.163 & 0.267 & 0.047 & 0.212 & 0.105 & \underline{0.504} & 0.501 \\
    \cmidrule(lr){1-9}
      \multirow[t]{3}{*}{Qwen-3.5-Flash} & Standard & 0.963 & \underline{0.304} & 0.367 & 0.607 & 0.105 & 0.382 & 0.518 \\
       & Strict & 0.716 & $-0.014$ & 0.498 & 0.688 & 0.074 & 0.318 & 0.519 \\
       & Persistent & 0.740 & 0.148 & 0.499 & 0.665 & \textbf{0.519} & 0.454 & \textit{0.544} \\
    \bottomrule
  \end{tabular}
  }
  \endgroup
\end{table}

\begin{table}[!t]
  \centering
  \caption{\textbf{Full Contribution results for secondary scores.} Metrics and column order match \Cref{tab:benchmark_main}. The best result in each column is shown in bold, the second-best is underlined, and the third-best is italicized.}
  \label{tab:benchmark_contribution_full}
  \begingroup
  \resizebox{\linewidth}{!}{%
  \begin{tabular}{@{}ll|cc|cc|ccc@{}}
    \toprule
    \multirow{2}{*}{\textbf{System}} & \multirow{2}{*}{\textbf{Protocol}} &
    \multicolumn{2}{c}{\textbf{Human alignment}} &
    \multicolumn{2}{c}{\textbf{Within-paper stability}} &
    \multicolumn{3}{c}{\textbf{Joint stability-discrimination}} \\
    \cmidrule(lr){3-4}\cmidrule(lr){5-6}\cmidrule(lr){7-9}
    & & H-MAE $\downarrow$ & Spearman $\uparrow$ &
    MAD $\downarrow$ & Drift SD $\downarrow$ & ICC $\uparrow$ &
    SPR $\uparrow$ & Discrim. $\uparrow$ \\
    \midrule
    \multicolumn{9}{@{}l}{\itshape General-purpose LLM reviewers} \\
    \cmidrule(lr){1-9}
      \multirow[t]{3}{*}{GPT-5.5} & Standard & 0.513 & 0.350 & 0.218 & 0.442 & \textit{0.600} & \textbf{0.587} & 0.620 \\
       & Strict & \underline{0.407} & 0.409 & 0.181 & 0.416 & 0.583 & 0.520 & \textit{0.626} \\
       & Persistent & \textit{0.416} & \textit{0.493} & 0.217 & 0.455 & \underline{0.629} & \underline{0.583} & \textbf{0.651} \\
    \cmidrule(lr){1-9}
      \multirow[t]{3}{*}{GPT-5-mini} & Standard & 0.658 & \textbf{0.515} & 0.160 & \textit{0.374} & 0.442 & 0.435 & 0.572 \\
       & Strict & 0.441 & 0.331 & 0.375 & 0.538 & 0.295 & 0.398 & 0.562 \\
       & Persistent & 0.666 & 0.202 & \underline{0.139} & \textbf{0.360} & 0.432 & 0.452 & 0.545 \\
    \cmidrule(lr){1-9}
      \multirow[t]{3}{*}{Claude Sonnet 5} & Standard & 0.443 & 0.415 & 0.247 & 0.488 & 0.534 & 0.503 & \underline{0.626} \\
       & Strict & \textbf{0.385} & 0.488 & \textit{0.155} & 0.391 & \textbf{0.631} & \textit{0.558} & 0.622 \\
       & Persistent & 0.455 & 0.283 & 0.268 & 0.508 & 0.452 & 0.481 & 0.613 \\
    \cmidrule(lr){1-9}
      \multirow[t]{3}{*}{GLM-5.2} & Standard & 0.721 & 0.026 & 0.537 & 0.926 & 0.185 & 0.308 & 0.538 \\
       & Strict & 0.682 & 0.002 & 0.654 & 0.901 & 0.170 & 0.377 & 0.541 \\
       & Persistent & 0.828 & $-0.083$ & 0.732 & 1.041 & 0.214 & 0.379 & 0.545 \\
    \cmidrule(lr){1-9}
      \multirow[t]{3}{*}{Kimi-K2.6} & Standard & 0.727 & 0.046 & 0.508 & 0.854 & 0.150 & 0.415 & 0.526 \\
       & Strict & 0.625 & $-0.059$ & 0.404 & 0.663 & 0.209 & 0.359 & 0.538 \\
       & Persistent & 0.635 & $-0.101$ & 0.524 & 0.757 & 0.179 & 0.352 & 0.544 \\
    \cmidrule(lr){1-9}
      \multirow[t]{3}{*}{GPT-OSS-120B} & Standard & 0.649 & 0.031 & 0.237 & 0.494 & 0.053 & 0.397 & 0.504 \\
       & Strict & 0.521 & 0.044 & \textbf{0.138} & \underline{0.368} & 0.097 & 0.257 & 0.508 \\
       & Persistent & 0.599 & 0.015 & 0.481 & 0.703 & 0.150 & 0.352 & 0.533 \\
    \cmidrule(lr){1-9}
      \multirow[t]{3}{*}{Gemini-3.5-Flash-Lite} & Standard & 1.357 & 0.307 & 0.162 & 0.385 & 0.243 & 0.481 & 0.519 \\
       & Strict & 0.807 & 0.450 & 0.385 & 0.546 & 0.398 & 0.470 & 0.589 \\
       & Persistent & 1.199 & \underline{0.503} & 0.237 & 0.468 & 0.373 & 0.502 & 0.553 \\
    \cmidrule(lr){1-9}
      \multirow[t]{3}{*}{Qwen-3.5-Flash} & Standard & 1.049 & 0.338 & 0.342 & 0.574 & 0.236 & 0.420 & 0.557 \\
       & Strict & 0.571 & 0.405 & 0.291 & 0.540 & 0.191 & 0.398 & 0.524 \\
       & Persistent & 0.905 & 0.420 & 0.382 & 0.594 & 0.595 & 0.516 & 0.587 \\
    \bottomrule
  \end{tabular}
  }
  \endgroup
\end{table}

\begin{table}[!t]
  \centering
  \caption{\textbf{Full Confidence results for secondary scores.} Metrics and column order match \Cref{tab:benchmark_main}. The best result in each column is shown in bold, the second-best is underlined, and the third-best is italicized. A dash denotes an undefined Spearman correlation or ICC.}
  \label{tab:benchmark_confidence_full}
  \begingroup
  \resizebox{\linewidth}{!}{%
  \begin{tabular}{@{}ll|cc|cc|ccc@{}}
    \toprule
    \multirow{2}{*}{\textbf{System}} & \multirow{2}{*}{\textbf{Protocol}} &
    \multicolumn{2}{c}{\textbf{Human alignment}} &
    \multicolumn{2}{c}{\textbf{Within-paper stability}} &
    \multicolumn{3}{c}{\textbf{Joint stability-discrimination}} \\
    \cmidrule(lr){3-4}\cmidrule(lr){5-6}\cmidrule(lr){7-9}
    & & H-MAE $\downarrow$ & Spearman $\uparrow$ &
    MAD $\downarrow$ & Drift SD $\downarrow$ & ICC $\uparrow$ &
    SPR $\uparrow$ & Discrim. $\uparrow$ \\
    \midrule
    \multicolumn{9}{@{}l}{\itshape General-purpose LLM reviewers} \\
    \cmidrule(lr){1-9}
      \multirow[t]{3}{*}{GPT-5.5} & Standard & 0.500 & $-0.013$ & 0.232 & 0.466 & 0.272 & 0.375 & \textit{0.550} \\
       & Strict & 0.516 & $-0.142$ & 0.142 & 0.367 & 0.193 & 0.304 & 0.520 \\
       & Persistent & \textbf{0.466} & 0.139 & 0.093 & 0.304 & 0.263 & \underline{0.450} & 0.517 \\
    \cmidrule(lr){1-9}
      \multirow[t]{3}{*}{GPT-5-mini} & Standard & \underline{0.489} & -- & \textbf{0.000} & \textbf{0.000} & -- & 0.000 & 0.500 \\
       & Strict & 0.505 & $-0.216$ & \underline{0.025} & \underline{0.157} & 0.002 & 0.396 & 0.500 \\
       & Persistent & 0.514 & $-0.102$ & 0.059 & 0.240 & 0.036 & \textit{0.445} & 0.501 \\
    \cmidrule(lr){1-9}
      \multirow[t]{3}{*}{Claude Sonnet 5} & Standard & 0.629 & $-0.154$ & 0.128 & 0.356 & 0.114 & 0.328 & 0.508 \\
       & Strict & 0.626 & $-0.152$ & 0.085 & 0.290 & \textit{0.285} & 0.423 & 0.517 \\
       & Persistent & 0.559 & 0.094 & 0.262 & 0.510 & 0.217 & 0.364 & 0.534 \\
    \cmidrule(lr){1-9}
      \multirow[t]{3}{*}{GLM-5.2} & Standard & 1.143 & $-0.101$ & 0.836 & 1.271 & 0.054 & 0.300 & 0.543 \\
       & Strict & 1.076 & \underline{0.186} & 0.967 & 1.407 & 0.019 & 0.329 & 0.523 \\
       & Persistent & 1.226 & 0.011 & 1.059 & 1.510 & 0.048 & 0.338 & 0.529 \\
    \cmidrule(lr){1-9}
      \multirow[t]{3}{*}{Kimi-K2.6} & Standard & 1.188 & $-0.036$ & 0.765 & 1.136 & 0.145 & 0.342 & \underline{0.552} \\
       & Strict & 1.148 & $-0.011$ & 0.841 & 1.245 & 0.065 & 0.311 & 0.538 \\
       & Persistent & 1.218 & \textbf{0.242} & 0.777 & 1.127 & 0.101 & 0.344 & 0.541 \\
    \cmidrule(lr){1-9}
      \multirow[t]{3}{*}{GPT-OSS-120B} & Standard & 0.570 & 0.110 & 0.483 & 0.739 & 0.055 & 0.334 & 0.508 \\
       & Strict & 0.672 & $-0.119$ & 0.505 & 0.751 & 0.080 & 0.333 & 0.509 \\
       & Persistent & 0.661 & 0.039 & 0.499 & 0.738 & 0.132 & 0.347 & 0.522 \\
    \cmidrule(lr){1-9}
      \multirow[t]{3}{*}{Gemini-3.5-Flash-Lite} & Standard & 1.139 & $-0.197$ & 0.270 & 0.486 & 0.247 & 0.445 & 0.532 \\
       & Strict & 0.922 & $-0.099$ & 0.333 & 0.531 & \underline{0.292} & 0.427 & \textbf{0.567} \\
       & Persistent & 1.132 & $-0.159$ & 0.240 & 0.450 & 0.261 & \textbf{0.467} & 0.528 \\
    \cmidrule(lr){1-9}
      \multirow[t]{3}{*}{Qwen-3.5-Flash} & Standard & 0.592 & $-0.012$ & 0.234 & 0.477 & 0.059 & 0.316 & 0.508 \\
       & Strict & \textit{0.489} & -- & \textit{0.040} & \textit{0.196} & 0.036 & 0.000 & 0.501 \\
       & Persistent & 0.553 & \textit{0.151} & 0.113 & 0.336 & \textbf{0.334} & 0.407 & 0.537 \\
    \bottomrule
  \end{tabular}
  }
  \endgroup
\end{table}

\clearpage

\clearpage
\section{Additional Validation}
\label{app:additional_validation}
\label{app:preservation_diagnostics}

\subsection{Core Technical-Content Fidelity Audit}
\label{app:rewrite_fidelity}

We use DeepSeek V4 Flash as an independent auditor to assess preservation of core
technical content between each rhetorical rewrite and its matched original.
Rhetorical emphasis and evaluative framing are allowed to vary as part of the
intended interventions. The auditor is
not used as a rewrite producer, science-core extractor, or reviewer
elsewhere in our experiments. The audit examines five dimensions:
\emph{table fidelity}, covering table structure, labels, values, and
method--dataset--metric associations; \emph{numerical fidelity}, covering
numerical values, statistics, confidence intervals, and p-values in the
manuscript body; \emph{experimental fidelity}, covering datasets, evaluated
subsets, splits, sample sizes, metrics, baselines, and experimental settings;
\emph{method fidelity}, covering method components, algorithmic operations,
equations, variables, assumptions, and technical dependencies; and
\emph{result fidelity}, covering the direction and ordering of empirical
results and positive versus negative observed effects.

The audit uses a binary decision rule. For paper $i$, rhetorical variant $k$,
and applicable fidelity dimension $d$, let $z_{ik}^{(d)}=1$ when no concrete
technical-content mismatch is identified and $z_{ik}^{(d)}=0$ when the
auditor identifies a mismatch supported by evidence from the matched
manuscripts. The fidelity rate for dimension $d$ is
\[
\operatorname{Fidelity}_{d}
=
\frac{1}{N_d}
\sum_{(i,k)\in\mathcal{A}_d} z_{ik}^{(d)},
\qquad
N_d=\lvert\mathcal{A}_d\rvert,
\]
where $\mathcal{A}_d$ contains the comparisons to which dimension $d$
applies.

To complement the automated audit, we conduct a human validation on 150
comparisons between originals and variants. We first randomly sample 120 of the 1,200
rhetorical variants (10\%), each evaluated against its matched original, and
supplement them with 30 additional comparisons for which the automated auditor
identifies at least one fidelity mismatch. Three graduate-level annotators
independently assess all 150 comparisons using the same five
dimensions: table, numerical, experimental, method, and result fidelity. For each comparison and dimension, the
human verdict is determined by majority vote among the three annotators.
Dimension-level human pass rates summarize these consensus verdicts across
the combined 150-comparison audit sample.
Overall inter-annotator agreement, measured as the mean pairwise
agreement across all 750 dimension-level judgments, is 94.6\%.

\begin{center}
\centering
\captionof{table}{\textbf{Core technical-content fidelity of rhetorical rewrites.}
Dimension-level rates report the proportion of applicable comparisons that
pass each check. Human rates summarize the combined audit sample of 120
randomly selected comparisons and 30 additional comparisons flagged by
the automated auditor.}
\label{tab:rewrite_fidelity_audit}
\begingroup
\small
\setlength{\tabcolsep}{5.5pt}
\renewcommand{\arraystretch}{1.08}
\begin{tabular*}{\textwidth}{@{\extracolsep{\fill}}lccccc@{}}
\toprule
& \multicolumn{5}{c}{\textbf{Dimension-level fidelity}} \\
\cmidrule(lr){2-6}
\textbf{Metric} & Table & Numerical & Experimental & Method & Result \\
\midrule
Automated pass rate (\%) & 95.2 & 96.4 & 94.8 & 97.3 & 96.5 \\
\midrule
Human pass rate (\%) & 97.3 & 98.0 & 96.7 & 98.7 & 97.3 \\
\bottomrule
\end{tabular*}
\endgroup
\end{center}

Across the five assessed dimensions, the automated and human checks indicate
that core technical content is largely preserved, with occasional
discrepancies. The audit serves as a diagnostic assessment, and all
comparisons are retained in the reported evaluation. Residual content
differences may contribute to observed score variation.

\subsection{Complementary Review Branches under Equal-Weight Fusion}
\label{app:manuscript_ensemble_controls}

Table~\ref{tab:manuscript_ensemble_controls} compares \oursm{} with
full-manuscript score averages. We reuse cached scores, average them
without rounding, and recompute all seven metrics. The Strict-anchored
comparisons reuse the same manuscript-branch scores.

Some apparent robustness gains may arise from averaging itself: averaging
integer scores allows half points, can reduce MAD and Drift SD through
drift cancellation, and changes distance ties in discriminability. To
assess whether \oursm{} retains an advantage under the same averaging
operation, the two-score manuscript controls match its averaging rule and
score resolution. The three-protocol average additionally matches its
nominal three-call budget: three reviews versus one extraction and two
reviews, with potentially different token costs.

\begin{table}[!htbp]
\centering
\caption{\textbf{Full-manuscript averaging controls on \oursb{}.} All configurations use GPT-5.5 and the same 60 papers with 21 versions each. Protocol names denote full-manuscript reviews. Scores are averaged separately for each original and rewrite before computing the metrics, with no rounding. \oursm{} averages Manuscript-Strict and Core-Adapted. The three-protocol control averages three review scores. All entries are point estimates.}
\label{tab:manuscript_ensemble_controls}
\begingroup
\footnotesize
\setlength{\tabcolsep}{3pt}
\renewcommand{\arraystretch}{1.05}
\resizebox{\textwidth}{!}{%
\begin{tabular}{@{}l|cc|cc|ccc@{}}
\toprule
\multirow{2}{*}{\textbf{Configuration}} &
\multicolumn{2}{c}{\textbf{Human alignment}} &
\multicolumn{2}{c}{\textbf{Within-paper stability}} &
\multicolumn{3}{c}{\textbf{Joint stability-discrimination}} \\
\cmidrule(lr){2-3}\cmidrule(lr){4-5}\cmidrule(lr){6-8}
& H-MAE $\downarrow$ & Spearman $\uparrow$ & MAD $\downarrow$ & Drift SD $\downarrow$ & ICC $\uparrow$ & SPR $\uparrow$ & Discrim. $\uparrow$ \\
\midrule
\multicolumn{8}{@{}l}{\itshape Single full-manuscript reviews} \\
\cmidrule(lr){1-8}
Manuscript-Standard & 1.294 & 0.448 & 0.598 & 0.956 & 0.615 & 0.555 & 0.649 \\
Manuscript-Strict & 1.078 & 0.529 & 0.766 & 1.202 & 0.632 & 0.507 & 0.672 \\
Manuscript-Persistent & 1.261 & 0.423 & 0.476 & 0.875 & 0.697 & 0.586 & 0.682 \\
\midrule
\multicolumn{8}{@{}l}{\itshape Two-score averages} \\
\cmidrule(lr){1-8}
Standard + Strict & 1.130 & 0.495 & 0.560 & 0.820 & 0.750 & 0.620 & 0.705 \\
Standard + Persistent & 1.180 & 0.455 & 0.470 & 0.681 & 0.755 & 0.640 & 0.712 \\
Strict + Persistent & 1.066 & 0.494 & 0.550 & 0.830 & 0.758 & 0.635 & 0.715 \\
\textbf{\oursm{}} & 1.072 & 0.488 & 0.476 & 0.687 & 0.775 & 0.652 & 0.726 \\
\midrule
\multicolumn{8}{@{}l}{\itshape Three-score average} \\
\cmidrule(lr){1-8}
Standard + Strict + Persistent & 1.110 & 0.466 & 0.520 & 0.681 & 0.781 & 0.635 & 0.715 \\
\bottomrule
\end{tabular}%
}
\endgroup
\end{table}

With Manuscript-Strict and equal weighting fixed, Core-Adapted improves
all five robustness metrics over either manuscript alternative, supported
by paired bootstrap intervals (Appendix~\ref{app:paper_family_bootstrap}).
Thus, the robustness advantage over these two controls persists when the
manuscript branch, averaging rule, and score resolution are matched.
Both alternatives have higher Spearman correlation, and Strict + Persistent
also has lower H-MAE. Standard + Persistent outperforms \oursm{} on MAD
and Drift SD, while the three-protocol average outperforms it on Drift SD
and ICC; \oursm{} has better point estimates on the other five metrics
in each comparison and remains Pareto non-dominated.

\subsection{Repeated-Review Noise Baseline}
\label{app:repeated_review_noise}
Following the repeated-review procedure in Appendix~\ref{app:reviewer_execution},
we obtain three independent reviews of each identical PDF, holding the
review prompts and inference settings fixed. We apply this procedure to
the three manuscript protocols and \oursm{} with GPT-5.5, following each
configuration's evaluation pipeline. MAD and ICC quantify variation across
repeated reviews of the same manuscript using the definitions in
Appendix~\ref{app:metric_definitions}.

Repeated reviews of identical PDFs yield MAD of 0.294--0.329 and ICC of
0.846--0.862 across the three manuscript protocols and \oursm{}.
\oursm{} and Manuscript-Strict have similar repeatability (MAD:
0.320 versus 0.329; ICC: 0.858 versus 0.851), but differ more under rewriting
(MAD: 0.476 versus 0.766; ICC: 0.775 versus 0.632), supporting a robustness gain beyond
the small difference in same-PDF repeatability.

\subsection{Paper-Family Bootstrap Analysis}
\label{app:paper_family_bootstrap}

Tables~\ref{tab:bootstrap_paired_comparisons} and
\ref{tab:bootstrap_main_intervals} use 5,000 paired bootstrap resamples of
the 60 paper families, keeping each original and its 20 variants together.
The same resamples are used across configurations to recompute metrics
and paired differences. Marginal 95\% intervals use the 2.5th and 97.5th
percentiles, with the recorded reviews held fixed.

The paired intervals in Table~\ref{tab:bootstrap_paired_comparisons} support
\oursm{}'s gains on all five robustness metrics over Manuscript-Strict and
both two-score manuscript ensembles, with tradeoffs in human alignment.
Relative to Core-Adapted, fusion improves human alignment, ICC, and
discriminability, while increasing MAD and Drift SD and reducing SPR.
Table~\ref{tab:bootstrap_main_intervals} gives metric intervals for the
primary comparison.

\begin{table}[!htbp]
\centering
\caption{\textbf{Paired bootstrap comparisons for \oursm{}.} All configurations use GPT-5.5 and the same 60 complete paper families. Each cell shows the difference (\oursm{} minus the column configuration), followed by its marginal 95\% percentile interval from 5,000 paired paper-family resamples. Negative differences favor \oursm{} for H-MAE, MAD, and Drift SD; positive differences favor \oursm{} for the other metrics.}
\label{tab:bootstrap_paired_comparisons}
\begingroup
\footnotesize
\setlength{\tabcolsep}{4pt}
\renewcommand{\arraystretch}{1.25}
\begin{tabular*}{\textwidth}{@{\extracolsep{\fill}}lcccc@{}}
\toprule
\textbf{Metric} & \shortstack{Manuscript-\\Strict} & Core-Adapted & \shortstack{Strict +\\Standard} & \shortstack{Strict +\\Persistent} \\
\midrule
H-MAE & \shortstack{$-0.006$\\$[-0.011,\,-0.001]$} & \shortstack{$-0.389$\\$[-0.564,\,-0.214]$} & \shortstack{$-0.058$\\$[-0.087,\,-0.029]$} & \shortstack{$+0.006$\\$[+0.001,\,+0.011]$} \\
Spearman & \shortstack{$-0.041$\\$[-0.068,\,-0.014]$} & \shortstack{$+0.203$\\$[+0.053,\,+0.366]$} & \shortstack{$-0.007$\\$[-0.012,\,-0.002]$} & \shortstack{$-0.006$\\$[-0.011,\,-0.001]$} \\
\midrule
MAD & \shortstack{$-0.290$\\$[-0.348,\,-0.232]$} & \shortstack{$+0.184$\\$[+0.084,\,+0.285]$} & \shortstack{$-0.084$\\$[-0.112,\,-0.056]$} & \shortstack{$-0.074$\\$[-0.101,\,-0.047]$} \\
Drift SD & \shortstack{$-0.515$\\$[-0.575,\,-0.455]$} & \shortstack{$+0.037$\\$[+0.006,\,+0.071]$} & \shortstack{$-0.133$\\$[-0.167,\,-0.099]$} & \shortstack{$-0.143$\\$[-0.180,\,-0.106]$} \\
ICC & \shortstack{$+0.143$\\$[+0.101,\,+0.185]$} & \shortstack{$+0.024$\\$[+0.004,\,+0.049]$} & \shortstack{$+0.025$\\$[+0.010,\,+0.040]$} & \shortstack{$+0.017$\\$[+0.005,\,+0.029]$} \\
SPR & \shortstack{$+0.145$\\$[+0.101,\,+0.189]$} & \shortstack{$-0.058$\\$[-0.101,\,-0.012]$} & \shortstack{$+0.032$\\$[+0.015,\,+0.049]$} & \shortstack{$+0.017$\\$[+0.005,\,+0.029]$} \\
Discrim. & \shortstack{$+0.054$\\$[+0.036,\,+0.072]$} & \shortstack{$+0.031$\\$[+0.001,\,+0.058]$} & \shortstack{$+0.021$\\$[+0.010,\,+0.032]$} & \shortstack{$+0.011$\\$[+0.003,\,+0.019]$} \\
\bottomrule
\end{tabular*}
\endgroup
\end{table}

\begin{table}[!htbp]
\centering
\caption{\textbf{Paper-family bootstrap intervals for the primary comparison.} Entries are marginal 95\% percentile intervals from 5,000 paired resamples of the 60 complete paper families; the corresponding point estimates appear in Table~\ref{tab:benchmark_main}. All original and rewritten versions stay together within each resample.}
\label{tab:bootstrap_main_intervals}
\begingroup
\footnotesize
\setlength{\tabcolsep}{2.2pt}
\renewcommand{\arraystretch}{1.12}
\resizebox{\textwidth}{!}{%
\begin{tabular}{@{}ll|cc|cc|ccc@{}}
\toprule
\multirow{2}{*}{\textbf{System}} & \multirow{2}{*}{\textbf{Protocol}} &
\multicolumn{2}{c}{\textbf{Human alignment}} &
\multicolumn{2}{c}{\textbf{Within-paper stability}} &
\multicolumn{3}{c}{\textbf{Joint stability-discrimination}} \\
\cmidrule(lr){3-4}\cmidrule(lr){5-6}\cmidrule(lr){7-9}
& & H-MAE $\downarrow$ & Spearman $\uparrow$ & MAD $\downarrow$ & Drift SD $\downarrow$ & ICC $\uparrow$ & SPR $\uparrow$ & Discrim. $\uparrow$ \\
\midrule
\multirow[t]{3}{*}{GPT-5.5} & Standard & $[1.067,\,1.531]$ & $[0.213,\,0.640]$ & $[0.476,\,0.717]$ & $[0.809,\,1.075]$ & $[0.492,\,0.698]$ & $[0.441,\,0.630]$ & $[0.614,\,0.679]$ \\
 & Strict & $[0.884,\,1.287]$ & $[0.305,\,0.700]$ & $[0.603,\,0.927]$ & $[1.038,\,1.338]$ & $[0.520,\,0.709]$ & $[0.416,\,0.593]$ & $[0.632,\,0.702]$ \\
 & Persistent & $[1.037,\,1.491]$ & $[0.161,\,0.638]$ & $[0.364,\,0.593]$ & $[0.715,\,1.017]$ & $[0.573,\,0.770]$ & $[0.430,\,0.696]$ & $[0.640,\,0.716]$ \\
\cmidrule(lr){1-9}
\multirow[t]{3}{*}{GPT-5-mini} & Standard & $[1.387,\,2.048]$ & $[0.148,\,0.634]$ & $[0.431,\,0.664]$ & $[0.821,\,1.057]$ & $[0.276,\,0.504]$ & $[0.263,\,0.487]$ & $[0.553,\,0.616]$ \\
 & Strict & $[1.014,\,1.471]$ & $[0.142,\,0.616]$ & $[0.749,\,1.046]$ & $[1.111,\,1.306]$ & $[0.283,\,0.481]$ & $[0.352,\,0.487]$ & $[0.567,\,0.625]$ \\
 & Persistent & $[1.344,\,1.919]$ & $[0.098,\,0.542]$ & $[0.426,\,0.749]$ & $[0.792,\,1.152]$ & $[0.326,\,0.581]$ & $[0.357,\,0.555]$ & $[0.551,\,0.623]$ \\
\cmidrule(lr){1-9}
\multirow[t]{3}{*}{Claude Sonnet 5} & Standard & $[0.962,\,1.432]$ & $[0.137,\,0.649]$ & $[0.364,\,0.577]$ & $[0.730,\,0.972]$ & $[0.467,\,0.671]$ & $[0.439,\,0.636]$ & $[0.607,\,0.678]$ \\
 & Strict & $[0.944,\,1.372]$ & $[0.214,\,0.679]$ & $[0.491,\,0.740]$ & $[0.919,\,1.157]$ & $[0.422,\,0.641]$ & $[0.457,\,0.627]$ & $[0.601,\,0.668]$ \\
 & Persistent & $[1.027,\,1.473]$ & $[0.053,\,0.602]$ & $[0.218,\,0.414]$ & $[0.571,\,0.834]$ & $[0.378,\,0.568]$ & $[0.350,\,0.575]$ & $[0.573,\,0.648]$ \\
\cmidrule(lr){1-9}
\multirow[t]{3}{*}{GLM-5.2} & Standard & $[1.706,\,2.394]$ & $[-0.296,\,0.241]$ & $[1.264,\,1.744]$ & $[2.071,\,2.572]$ & $[0.143,\,0.305]$ & $[0.196,\,0.411]$ & $[0.539,\,0.576]$ \\
 & Strict & $[1.687,\,2.393]$ & $[-0.294,\,0.222]$ & $[1.391,\,1.709]$ & $[1.898,\,2.250]$ & $[0.139,\,0.283]$ & $[0.337,\,0.429]$ & $[0.538,\,0.572]$ \\
 & Persistent & $[2.002,\,2.822]$ & $[-0.451,\,0.094]$ & $[1.663,\,2.218]$ & $[2.407,\,2.902]$ & $[0.147,\,0.308]$ & $[0.328,\,0.432]$ & $[0.536,\,0.579]$ \\
\cmidrule(lr){1-9}
\multirow[t]{3}{*}{Kimi-K2.6} & Standard & $[1.374,\,2.025]$ & $[0.027,\,0.504]$ & $[1.128,\,1.595]$ & $[1.698,\,2.237]$ & $[0.131,\,0.291]$ & $[0.357,\,0.474]$ & $[0.535,\,0.577]$ \\
 & Strict & $[1.419,\,2.072]$ & $[-0.200,\,0.300]$ & $[0.610,\,0.912]$ & $[1.093,\,1.424]$ & $[0.135,\,0.431]$ & $[0.280,\,0.447]$ & $[0.522,\,0.569]$ \\
 & Persistent & $[1.261,\,1.923]$ & $[-0.233,\,0.329]$ & $[0.987,\,1.319]$ & $[1.453,\,1.824]$ & $[0.161,\,0.410]$ & $[0.347,\,0.426]$ & $[0.543,\,0.596]$ \\
\cmidrule(lr){1-9}
\multirow[t]{3}{*}{GPT-OSS-120B} & Standard & $[1.298,\,1.798]$ & $[-0.195,\,0.336]$ & $[0.586,\,0.863]$ & $[0.940,\,1.297]$ & $[0.089,\,0.186]$ & $[0.277,\,0.443]$ & $[0.521,\,0.541]$ \\
 & Strict & $[1.379,\,2.024]$ & $[-0.163,\,0.375]$ & $[0.276,\,0.511]$ & $[0.733,\,1.013]$ & $[0.049,\,0.132]$ & $[0.176,\,0.428]$ & $[0.504,\,0.513]$ \\
 & Persistent & $[1.289,\,1.774]$ & $[-0.250,\,0.237]$ & $[0.707,\,1.043]$ & $[1.096,\,1.545]$ & $[0.112,\,0.207]$ & $[0.266,\,0.392]$ & $[0.526,\,0.544]$ \\
\cmidrule(lr){1-9}
\multirow[t]{3}{*}{Gemini 3.5 Flash-Lite} & Standard & $[2.828,\,3.532]$ & $[0.208,\,0.556]$ & $[0.105,\,0.323]$ & $[0.450,\,0.773]$ & $[0.074,\,0.311]$ & $[0.340,\,0.521]$ & $[0.503,\,0.523]$ \\
 & Strict & $[1.553,\,2.209]$ & $[0.242,\,0.654]$ & $[0.691,\,0.998]$ & $[1.071,\,1.296]$ & $[0.316,\,0.512]$ & $[0.431,\,0.587]$ & $[0.573,\,0.622]$ \\
 & Persistent & $[2.491,\,3.176]$ & $[0.271,\,0.649]$ & $[0.268,\,0.651]$ & $[0.715,\,1.218]$ & $[0.217,\,0.449]$ & $[0.446,\,0.540]$ & $[0.518,\,0.561]$ \\
\cmidrule(lr){1-9}
\multirow[t]{3}{*}{Qwen 3.5 Flash} & Standard & $[1.988,\,2.722]$ & $[0.129,\,0.595]$ & $[0.645,\,1.019]$ & $[1.105,\,1.628]$ & $[0.187,\,0.388]$ & $[0.396,\,0.516]$ & $[0.538,\,0.584]$ \\
 & Strict & $[1.046,\,1.433]$ & $[0.180,\,0.586]$ & $[0.739,\,1.009]$ & $[1.177,\,1.487]$ & $[0.133,\,0.311]$ & $[0.304,\,0.468]$ & $[0.524,\,0.558]$ \\
 & Persistent & $[1.699,\,2.393]$ & $[0.244,\,0.635]$ & $[0.752,\,0.975]$ & $[1.167,\,1.372]$ & $[0.228,\,0.752]$ & $[0.369,\,0.640]$ & $[0.553,\,0.642]$ \\
\cmidrule(lr){1-9}
OpenReviewer & -- & $[1.126,\,1.708]$ & $[0.096,\,0.550]$ & $[0.929,\,1.251]$ & $[1.328,\,1.703]$ & $[0.140,\,0.279]$ & $[0.300,\,0.424]$ & $[0.527,\,0.556]$ \\
CycleReviewer & -- & $[1.159,\,1.660]$ & $[-0.172,\,0.310]$ & $[0.663,\,0.913]$ & $[0.895,\,1.178]$ & $[0.081,\,0.176]$ & $[0.292,\,0.389]$ & $[0.521,\,0.545]$ \\
DeepReviewer & -- & $[1.204,\,1.749]$ & $[0.178,\,0.587]$ & $[0.362,\,0.553]$ & $[0.583,\,0.784]$ & $[0.087,\,0.229]$ & $[0.293,\,0.433]$ & $[0.523,\,0.549]$ \\
AI Scientist & -- & $[1.219,\,1.822]$ & $[-0.171,\,0.347]$ & $[0.698,\,0.989]$ & $[1.027,\,1.471]$ & $[0.142,\,0.273]$ & $[0.240,\,0.461]$ & $[0.529,\,0.559]$ \\
OpenJudge & -- & $[1.112,\,1.504]$ & $[-0.131,\,0.362]$ & $[0.244,\,0.434]$ & $[0.501,\,0.678]$ & $[0.107,\,0.345]$ & $[0.338,\,0.512]$ & $[0.509,\,0.548]$ \\
ProReviewer & -- & $[1.260,\,1.697]$ & $[-0.017,\,0.463]$ & $[0.632,\,0.796]$ & $[0.882,\,1.057]$ & $[0.058,\,0.128]$ & $[0.275,\,0.385]$ & $[0.509,\,0.524]$ \\
\midrule
\textbf{\oursm{}} & -- & $[0.878,\,1.278]$ & $[0.248,\,0.678]$ & $[0.394,\,0.558]$ & $[0.600,\,0.759]$ & $[0.667,\,0.835]$ & $[0.545,\,0.731]$ & $[0.678,\,0.760]$ \\
\bottomrule
\end{tabular}%
}
\endgroup
\end{table}

\subsection{Science-Core Preservation Diagnostics}

The preservation analysis compares the embedding of a science-core report
extracted from an original manuscript with the embedding of the report
extracted from its matched rhetorical variant:
\[
\operatorname{cos}\!\left(
E(r_i^{\mathrm{original}}),
E(r_{ikp}^{\mathrm{variant}})
\right).
\]
Here \(i\) indexes papers, \(k\) rewrite conditions, and \(p\) rewrite
producers. The condition-level tables keep GPT-5.5 and Opus 4.8 rewrite
producers separate and report the mean, median, and fifth percentile.

We embed reports with \texttt{text-embedding-3-small} via the OpenAI API using
default API parameters.

\label{app:condition_producer}

\begin{table}[!h]
\centering
\caption{\textbf{Condition-level science-core similarity for
GPT-5-mini.} P05 denotes the fifth percentile across comparisons. The
cross-paper baseline compares variant cores with nonmatching original cores.}
\label{tab:report_similarity_conditions_gptmini}
\begingroup
\footnotesize
\setlength{\tabcolsep}{4.5pt}
\renewcommand{\arraystretch}{0.96}
\begin{tabular}{@{}lcccccc@{}}
\toprule
\multirow{2}{*}{\textbf{Rhetorical condition}} &
\multicolumn{3}{c}{\textbf{GPT-5.5 producer}} &
\multicolumn{3}{c}{\textbf{Opus 4.8 producer}} \\
\cmidrule(lr){2-4}\cmidrule(lr){5-7}
& Mean & Median & P05 & Mean & Median & P05 \\
\midrule
Novelty stance & 0.975 & 0.977 & 0.963 & 0.965 & 0.976 & 0.960 \\
Scope framing & 0.976 & 0.976 & 0.969 & 0.976 & 0.977 & 0.965 \\
Evidence framing & 0.972 & 0.974 & 0.957 & 0.973 & 0.975 & 0.958 \\
Contribution salience & 0.974 & 0.976 & 0.964 & 0.975 & 0.976 & 0.963 \\
Technical register & 0.976 & 0.976 & 0.968 & 0.974 & 0.975 & 0.964 \\
Linguistic complexity & 0.966 & 0.978 & 0.966 & 0.977 & 0.977 & 0.968 \\
Joint rewrite & 0.977 & 0.978 & 0.969 & 0.975 & 0.977 & 0.962 \\
Recursive rewrite (R2) & 0.977 & 0.978 & 0.967 & 0.974 & 0.974 & 0.962 \\
Recursive rewrite (R3) & 0.975 & 0.978 & 0.963 & 0.974 & 0.975 & 0.963 \\
Reviewer-guided rewrite & 0.976 & 0.977 & 0.964 & 0.972 & 0.974 & 0.958 \\
\midrule
Cross-paper baseline & 0.668 & 0.668 & 0.582 & 0.670 & 0.671 & 0.585 \\
\bottomrule
\end{tabular}
\endgroup
\end{table}

\begin{table}[!h]
\centering
\caption{\textbf{Condition-level science-core similarity for GPT-5.5.}
P05 denotes the fifth percentile across comparisons. The cross-paper baseline
compares variant cores with nonmatching original cores.}
\label{tab:report_similarity_conditions_gpt55}
\begingroup
\footnotesize
\setlength{\tabcolsep}{4.5pt}
\renewcommand{\arraystretch}{0.96}
\begin{tabular}{@{}lcccccc@{}}
\toprule
\multirow{2}{*}{\textbf{Rhetorical condition}} &
\multicolumn{3}{c}{\textbf{GPT-5.5 producer}} &
\multicolumn{3}{c}{\textbf{Opus 4.8 producer}} \\
\cmidrule(lr){2-4}\cmidrule(lr){5-7}
& Mean & Median & P05 & Mean & Median & P05 \\
\midrule
Novelty stance & 0.979 & 0.979 & 0.968 & 0.979 & 0.981 & 0.967 \\
Scope framing & 0.978 & 0.978 & 0.965 & 0.980 & 0.980 & 0.969 \\
Evidence framing & 0.976 & 0.978 & 0.964 & 0.977 & 0.979 & 0.961 \\
Contribution salience & 0.976 & 0.978 & 0.963 & 0.977 & 0.979 & 0.965 \\
Technical register & 0.978 & 0.978 & 0.968 & 0.976 & 0.977 & 0.966 \\
Linguistic complexity & 0.979 & 0.980 & 0.967 & 0.980 & 0.981 & 0.970 \\
Joint rewrite & 0.980 & 0.981 & 0.970 & 0.979 & 0.979 & 0.968 \\
Recursive rewrite (R2) & 0.979 & 0.980 & 0.970 & 0.978 & 0.979 & 0.965 \\
Recursive rewrite (R3) & 0.979 & 0.979 & 0.970 & 0.977 & 0.978 & 0.962 \\
Reviewer-guided rewrite & 0.977 & 0.978 & 0.962 & 0.977 & 0.978 & 0.966 \\
\midrule
Cross-paper baseline & 0.616 & 0.617 & 0.516 & 0.619 & 0.620 & 0.518 \\
\bottomrule
\end{tabular}
\endgroup
\end{table}

\begin{table}[!h]
\centering
\caption{\textbf{Condition-level science-core similarity for
Gemini-3.5-Flash-Lite.} P05 denotes the fifth percentile across comparisons.
The cross-paper baseline compares variant cores with nonmatching original
cores.}
\label{tab:report_similarity_conditions_gemini35}
\begingroup
\footnotesize
\setlength{\tabcolsep}{4.5pt}
\renewcommand{\arraystretch}{0.96}
\begin{tabular}{@{}lcccccc@{}}
\toprule
\multirow{2}{*}{\textbf{Rhetorical condition}} &
\multicolumn{3}{c}{\textbf{GPT-5.5 producer}} &
\multicolumn{3}{c}{\textbf{Opus 4.8 producer}} \\
\cmidrule(lr){2-4}\cmidrule(lr){5-7}
& Mean & Median & P05 & Mean & Median & P05 \\
\midrule
Novelty stance & 0.963 & 0.967 & 0.939 & 0.965 & 0.965 & 0.940 \\
Scope framing & 0.961 & 0.965 & 0.927 & 0.964 & 0.967 & 0.944 \\
Evidence framing & 0.958 & 0.961 & 0.930 & 0.961 & 0.963 & 0.935 \\
Contribution salience & 0.959 & 0.964 & 0.934 & 0.961 & 0.963 & 0.930 \\
Technical register & 0.962 & 0.965 & 0.940 & 0.961 & 0.964 & 0.935 \\
Linguistic complexity & 0.964 & 0.969 & 0.934 & 0.963 & 0.965 & 0.939 \\
Joint rewrite & 0.964 & 0.967 & 0.930 & 0.959 & 0.963 & 0.937 \\
Recursive rewrite (R2) & 0.964 & 0.966 & 0.933 & 0.959 & 0.963 & 0.928 \\
Recursive rewrite (R3) & 0.966 & 0.968 & 0.937 & 0.961 & 0.960 & 0.942 \\
Reviewer-guided rewrite & 0.963 & 0.965 & 0.937 & 0.959 & 0.961 & 0.930 \\
\midrule
Cross-paper baseline & 0.606 & 0.606 & 0.506 & 0.609 & 0.608 & 0.509 \\
\bottomrule
\end{tabular}
\endgroup
\end{table}

\subsection{Complete Science-Core Weight Sensitivity Results}
\label{app:scicore_weight_sensitivity}

Figure~\ref{fig:scicore_fusion_weight_ablation} provides a descriptive,
direction-normalized visualization of the science-core weight sweep. The
complete unnormalized numerical results are reported in
Tables~\ref{tab:scicore_weight_standard}--\ref{tab:scicore_weight_persistent}.
Each table fixes the manuscript protocol and varies the science-core weight
$\alpha$ from 0 (manuscript only) to 1 (science core only). Intermediate rows
use the continuous post-hoc fusion in Equation~\ref{eq:scicore_weight_sweep}.

\begin{table}[!htbp]
  \centering
  \caption{\textbf{Complete science-core weight sensitivity with Manuscript-Standard.} All entries are unnormalized numerical results. Arrows indicate the preferred direction.}
  \label{tab:scicore_weight_standard}
  \begingroup
  \scriptsize
  \setlength{\tabcolsep}{5pt}
  \renewcommand{\arraystretch}{1.02}
  \begin{tabular}{@{}c|cc|ccc|cc@{}}
    \toprule
    \multirow{2}{*}{$\boldsymbol{\alpha}$} &
    \multicolumn{2}{c}{\textbf{Within-paper stability}} &
    \multicolumn{3}{c}{\textbf{Joint stability-discrimination}} &
    \multicolumn{2}{c}{\textbf{Human alignment}} \\
    \cmidrule(lr){2-3}\cmidrule(lr){4-6}\cmidrule(lr){7-8}
    & MAD $\downarrow$ & Drift SD $\downarrow$ & ICC $\uparrow$ &
    SPR $\uparrow$ & Discrim. $\uparrow$ & H-MAE $\downarrow$ & Spearman $\uparrow$ \\
    \midrule
    0.00 & 0.598 & 0.956 & 0.615 & 0.555 & 0.649 & 1.294 & 0.448 \\
    0.05 & 0.577 & 0.911 & 0.633 & 0.566 & 0.699 & 1.294 & 0.441 \\
    0.10 & 0.556 & 0.867 & 0.652 & 0.577 & 0.699 & 1.293 & 0.441 \\
    0.15 & 0.534 & 0.824 & 0.671 & 0.590 & 0.699 & 1.292 & 0.441 \\
    0.20 & 0.513 & 0.784 & 0.690 & 0.603 & 0.701 & 1.291 & 0.441 \\
    0.25 & 0.492 & 0.745 & 0.708 & 0.618 & 0.705 & 1.290 & 0.434 \\
    0.30 & 0.470 & 0.708 & 0.726 & 0.633 & 0.711 & 1.293 & 0.425 \\
    0.35 & 0.450 & 0.675 & 0.743 & 0.648 & 0.716 & 1.296 & 0.406 \\
    0.40 & 0.430 & 0.644 & 0.758 & 0.664 & 0.717 & 1.300 & 0.406 \\
    0.45 & 0.410 & 0.617 & 0.772 & 0.679 & 0.719 & 1.304 & 0.406 \\
    0.50 & 0.390 & 0.594 & 0.783 & 0.693 & 0.723 & 1.308 & 0.396 \\
    0.55 & 0.380 & 0.576 & 0.791 & 0.705 & 0.739 & 1.321 & 0.375 \\
    0.60 & 0.369 & 0.562 & 0.797 & 0.716 & 0.738 & 1.333 & 0.369 \\
    0.65 & 0.359 & 0.555 & 0.800 & 0.724 & 0.738 & 1.346 & 0.369 \\
    0.70 & 0.350 & 0.552 & 0.800 & 0.730 & 0.742 & 1.358 & 0.369 \\
    0.75 & 0.340 & 0.556 & 0.797 & 0.732 & 0.739 & 1.371 & 0.369 \\
    0.80 & 0.330 & 0.565 & 0.792 & 0.732 & 0.737 & 1.387 & 0.369 \\
    0.85 & 0.321 & 0.579 & 0.784 & 0.729 & 0.736 & 1.404 & 0.369 \\
    0.90 & 0.311 & 0.599 & 0.775 & 0.724 & 0.735 & 1.423 & 0.369 \\
    0.95 & 0.301 & 0.622 & 0.763 & 0.718 & 0.735 & 1.442 & 0.369 \\
    1.00 & 0.292 & 0.650 & 0.751 & 0.710 & 0.695 & 1.461 & 0.285 \\
    \bottomrule
  \end{tabular}
  \endgroup
\end{table}

\begin{table}[!htbp]
  \centering
  \caption{\textbf{Complete science-core weight sensitivity with Manuscript-Strict.} All entries are unnormalized numerical results. Arrows indicate the preferred direction.}
  \label{tab:scicore_weight_strict}
  \begingroup
  \scriptsize
  \setlength{\tabcolsep}{5pt}
  \renewcommand{\arraystretch}{1.02}
  \begin{tabular}{@{}c|cc|ccc|cc@{}}
    \toprule
    \multirow{2}{*}{$\boldsymbol{\alpha}$} &
    \multicolumn{2}{c}{\textbf{Within-paper stability}} &
    \multicolumn{3}{c}{\textbf{Joint stability-discrimination}} &
    \multicolumn{2}{c}{\textbf{Human alignment}} \\
    \cmidrule(lr){2-3}\cmidrule(lr){4-6}\cmidrule(lr){7-8}
    & MAD $\downarrow$ & Drift SD $\downarrow$ & ICC $\uparrow$ &
    SPR $\uparrow$ & Discrim. $\uparrow$ & H-MAE $\downarrow$ & Spearman $\uparrow$ \\
    \midrule
    0.00 & 0.766 & 1.202 & 0.632 & 0.507 & 0.672 & 1.078 & 0.529 \\
    0.05 & 0.736 & 1.143 & 0.645 & 0.516 & 0.712 & 1.061 & 0.513 \\
    0.10 & 0.707 & 1.085 & 0.659 & 0.526 & 0.712 & 1.044 & 0.513 \\
    0.15 & 0.677 & 1.028 & 0.674 & 0.538 & 0.712 & 1.032 & 0.513 \\
    0.20 & 0.648 & 0.972 & 0.689 & 0.550 & 0.713 & 1.022 & 0.513 \\
    0.25 & 0.619 & 0.918 & 0.704 & 0.565 & 0.715 & 1.014 & 0.510 \\
    0.30 & 0.589 & 0.866 & 0.720 & 0.580 & 0.719 & 1.016 & 0.504 \\
    0.35 & 0.560 & 0.817 & 0.735 & 0.597 & 0.722 & 1.022 & 0.497 \\
    0.40 & 0.532 & 0.770 & 0.749 & 0.615 & 0.723 & 1.039 & 0.497 \\
    0.45 & 0.504 & 0.727 & 0.763 & 0.633 & 0.725 & 1.056 & 0.497 \\
    0.50 & 0.476 & 0.687 & 0.775 & 0.652 & 0.726 & 1.072 & 0.488 \\
    0.55 & 0.456 & 0.653 & 0.786 & 0.671 & 0.735 & 1.107 & 0.474 \\
    0.60 & 0.435 & 0.623 & 0.794 & 0.688 & 0.734 & 1.142 & 0.451 \\
    0.65 & 0.414 & 0.600 & 0.800 & 0.703 & 0.733 & 1.177 & 0.451 \\
    0.70 & 0.396 & 0.585 & 0.802 & 0.716 & 0.745 & 1.212 & 0.415 \\
    0.75 & 0.378 & 0.576 & 0.801 & 0.724 & 0.741 & 1.247 & 0.406 \\
    0.80 & 0.361 & 0.576 & 0.797 & 0.729 & 0.742 & 1.286 & 0.397 \\
    0.85 & 0.344 & 0.584 & 0.790 & 0.729 & 0.743 & 1.326 & 0.397 \\
    0.90 & 0.326 & 0.599 & 0.779 & 0.726 & 0.742 & 1.371 & 0.397 \\
    0.95 & 0.309 & 0.621 & 0.766 & 0.719 & 0.742 & 1.416 & 0.397 \\
    1.00 & 0.292 & 0.650 & 0.751 & 0.710 & 0.695 & 1.461 & 0.285 \\
    \bottomrule
  \end{tabular}
  \endgroup
\end{table}

\begin{table}[!htbp]
  \centering
  \caption{\textbf{Complete science-core weight sensitivity with Manuscript-Persistent.} All entries are unnormalized numerical results. Arrows indicate the preferred direction.}
  \label{tab:scicore_weight_persistent}
  \begingroup
  \scriptsize
  \setlength{\tabcolsep}{5pt}
  \renewcommand{\arraystretch}{1.02}
  \begin{tabular}{@{}c|cc|ccc|cc@{}}
    \toprule
    \multirow{2}{*}{$\boldsymbol{\alpha}$} &
    \multicolumn{2}{c}{\textbf{Within-paper stability}} &
    \multicolumn{3}{c}{\textbf{Joint stability-discrimination}} &
    \multicolumn{2}{c}{\textbf{Human alignment}} \\
    \cmidrule(lr){2-3}\cmidrule(lr){4-6}\cmidrule(lr){7-8}
    & MAD $\downarrow$ & Drift SD $\downarrow$ & ICC $\uparrow$ &
    SPR $\uparrow$ & Discrim. $\uparrow$ & H-MAE $\downarrow$ & Spearman $\uparrow$ \\
    \midrule
    0.00 & 0.476 & 0.875 & 0.697 & 0.586 & 0.682 & 1.261 & 0.423 \\
    0.05 & 0.461 & 0.830 & 0.712 & 0.601 & 0.727 & 1.269 & 0.408 \\
    0.10 & 0.447 & 0.787 & 0.726 & 0.617 & 0.727 & 1.276 & 0.408 \\
    0.15 & 0.432 & 0.746 & 0.741 & 0.634 & 0.727 & 1.284 & 0.408 \\
    0.20 & 0.418 & 0.707 & 0.755 & 0.652 & 0.729 & 1.291 & 0.408 \\
    0.25 & 0.403 & 0.670 & 0.769 & 0.669 & 0.732 & 1.299 & 0.408 \\
    0.30 & 0.389 & 0.636 & 0.782 & 0.687 & 0.736 & 1.306 & 0.408 \\
    0.35 & 0.374 & 0.604 & 0.794 & 0.704 & 0.736 & 1.314 & 0.408 \\
    0.40 & 0.360 & 0.577 & 0.804 & 0.720 & 0.738 & 1.321 & 0.408 \\
    0.45 & 0.347 & 0.554 & 0.812 & 0.734 & 0.739 & 1.329 & 0.408 \\
    0.50 & 0.333 & 0.535 & 0.819 & 0.747 & 0.738 & 1.336 & 0.402 \\
    0.55 & 0.329 & 0.522 & 0.823 & 0.756 & 0.752 & 1.349 & 0.378 \\
    0.60 & 0.325 & 0.514 & 0.825 & 0.763 & 0.751 & 1.361 & 0.378 \\
    0.65 & 0.321 & 0.512 & 0.824 & 0.766 & 0.751 & 1.374 & 0.378 \\
    0.70 & 0.316 & 0.517 & 0.820 & 0.766 & 0.754 & 1.386 & 0.378 \\
    0.75 & 0.312 & 0.527 & 0.814 & 0.762 & 0.750 & 1.399 & 0.378 \\
    0.80 & 0.308 & 0.542 & 0.806 & 0.756 & 0.747 & 1.411 & 0.378 \\
    0.85 & 0.304 & 0.563 & 0.795 & 0.747 & 0.745 & 1.424 & 0.378 \\
    0.90 & 0.300 & 0.588 & 0.782 & 0.736 & 0.744 & 1.436 & 0.378 \\
    0.95 & 0.296 & 0.618 & 0.767 & 0.723 & 0.744 & 1.449 & 0.378 \\
    1.00 & 0.292 & 0.650 & 0.751 & 0.710 & 0.695 & 1.461 & 0.285 \\
    \bottomrule
  \end{tabular}
  \endgroup
\end{table}

\clearpage
\section{Compute, Cost, and Efficiency}
\label{app:compute_cost}

We report API expenditure by experimental task. \oursb{} construction covers
the full-manuscript rewrites and reviewer-guided feedback used to construct the
controlled corpus. Review-only covers the general-purpose reviewer grid and
the repeated-review audit. The \oursm{} task includes science-core extraction,
the branch-policy evaluations, and embedding; its manuscript branch reuses the GPT-5.5 \textsc{Strict} reviews counted under Review-only, and score fusion introduces no additional API call. ReconstructReview includes
manuscript reconstruction and final review while reusing cached science cores.
The AI Scientist and OpenJudge entries cover their released agentic review
workflows. The task-level costs in Table~\ref{tab:execution_accounting} are the
recorded expenditures from the provider API consoles and include billable
retries and failed attempts.

\begin{table}[!t]
\centering
\caption{\textbf{API expenditure by experimental task.} Review-only costs are
aggregated by billing route rather than by model. Costs are the recorded
expenditures from the provider API consoles.}
\label{tab:execution_accounting}
\begingroup
\scriptsize
\setlength{\tabcolsep}{8pt}
\renewcommand{\arraystretch}{1.05}
\begin{tabular}{@{}lr@{}}
\toprule
Task & Cost (USD) \\
\midrule
\oursb{} construction & \$6,688.24 \\
Review-only (OpenAI API) & \$3,767.59 \\
Review-only (OpenRouter API) & \$870.60 \\
\oursm evaluation & \$3,043.48 \\
ReconstructReview & \$8,193.82 \\
AI Scientist & \$391.55 \\
OpenJudge & \$620.24 \\
\midrule
\textbf{Total API expenditure} & \textbf{\$23,575.52} \\
\bottomrule
\end{tabular}
\endgroup
\end{table}

OpenReviewer, CycleReviewer, DeepReviewer, and the trained ProReviewer backbone
are excluded from the API expenditure total because they were served locally
on a server equipped with 8 NVIDIA A100 GPUs.

\end{document}